\theoremstyle{plain}
\newtheorem{lemma}{Lemma}
\newtheorem{corollary}{Corollary}
\newtheorem{assumption}{Assumption}
\theoremstyle{definition}
\theoremstyle{remark}
\crefname{assumption}{Assumption}{Assumptions}
\crefname{theorem}{Theorem}{Theorems}
\crefname{equation}{}{}
\crefname{ALC@unique}{Line}{Lines}
\let\originalleft\left
\let\originalright\right
\renewcommand{\left}{\mathopen{}\mathclose\bgroup\originalleft}
\renewcommand{\right}{\aftergroup\egroup\originalright}
\newcommand{\paren}[1]{\left(#1\right)}
\newcommand{\bracket}[1]{\left[#1\right]}
\newcommand{\Cov}{\mathrm{Cov}}
\newcommand{\given}{\;\middle|\;}
\newcounter{myalg}
\newcommand\thefontsize{The current font size is: \f@size pt}
\definecolor{LightGray}{gray}{0.9}
\DeclareMathOperator*{\argmax}{argmax} 
\DeclareMathOperator{\E}{\mathbb{E}}
\DeclareMathOperator{\Prob}{\mathbb{P}}
\DeclareMathOperator{\V}{\mathbb{V}}
\DeclareMathOperator{\R}{\mathbb{R}}
\renewcommand{\paragraph}[1]{\textbf{#1}\hspace{0em}}
\definecolor{mine}{RGB}{205, 232, 248}%
\def\endthebibliography{%
  \def\@noitemerr{\@latex@warning{Empty `thebibliography' environment}}%
  \endlist
}
\begin{document}

\title{Model-Based Epistemic Variance of Values for Risk-Aware Policy Optimization}

\author*[1,2]{\fnm{Carlos E.} \sur{Luis}}\email{carlos@robot-learning.de}

\author[1,2]{\fnm{Alessandro G.} \sur{Bottero}}\email{alessandrogiacomo.bottero@bosch.com}

\author[1]{\fnm{Julia} \sur{Vinogradska}}\email{julia.vinogradska@bosch.com}

\author[1]{\fnm{Felix} \sur{Berkenkamp}}\email{felix.berkenkamp@bosch.com}

\author[2,3,4,5]{\fnm{Jan} \sur{Peters}}\email{jan.peters@tu-darmstadt.de}

\affil*[1]{\orgdiv{Bosch Corporate Research}, \orgaddress{\street{Robert-Bosch-Campus
1}, \city{Renningen}, \postcode{71272}, \country{Germany}}}

\affil[2]{\orgdiv{Intelligent Autonomous Systems Group}, \orgname{Technical University
Darmstadt}, \orgaddress{\street{Hochschulstr. 10}, \city{Darmstadt},
\postcode{64289}, \country{Germany}}}

\affil[3]{\orgdiv{German Research Center for Artificial Intelligence (DFKI)},
\orgaddress{\street{Landwehrstr. 50A}, \city{Darmstadt}, \postcode{64293},
\country{Germany}}}

\affil[4]{\orgdiv{Hessian.AI}, \orgaddress{\country{Germany}}}

\affil[5]{\orgdiv{Centre for Cognitive Science},
\orgaddress{\city{Darmstadt}, \country{Germany}}}

\abstract{ We consider the problem of quantifying uncertainty over expected cumulative
 rewards in model-based reinforcement learning. In particular, we focus on
 characterizing the \emph{variance} over values induced by a distribution over Markov
 decision processes (MDPs). Previous work upper bounds the posterior variance over
 values by solving a so-called uncertainty Bellman equation (UBE), but the
 over-approximation may result in inefficient exploration. We propose a new UBE whose
 solution converges to the true posterior variance over values and leads to lower regret
 in tabular exploration problems. We identify challenges to apply the UBE theory beyond
 tabular problems and propose a suitable approximation. Based on this approximation, we
 introduce a general-purpose policy optimization algorithm, $Q$-Uncertainty Soft
 Actor-Critic (QU-SAC), that can be applied for either risk-seeking or risk-averse
 policy optimization with minimal changes. Experiments in both online and offline RL
 demonstrate improved performance compared to other uncertainty estimation methods. }   

\keywords{Model-Based Reinforcement Learning, Bayesian Reinforcement Learning, Uncertainty Quantification.}

\maketitle

\section{Introduction}
\label{sec:introduction}
The goal of reinforcement learning (RL) is optimal decision-making in an \emph{a priori}
unknown Markov Decision Process (MDP) \citep{sutton_reinforcement_2018}. The RL agent
obtains rewards from interactions with the MDP and optimality is defined by some utility
function of the accumulated rewards (also known as return). The return is, in general, a
random variable due to two distinct types of uncertainty: \emph{aleatoric}, induced by a
combination of the agent's stochastic action selection and the random MDP state
transitions; and \emph{epistemic}, due to having limited data of the unknown MDP
\citep{kiureghian_aleatory_2009}. Aleatoric uncertainty is irreducible, since it is an
inherent property of the problem, while epistemic uncertainty can be reduced by further
interactions with the MDP. A standard utility in the RL literature is the
\emph{expected} return, known as the value function, which is a \emph{risk-neutral}
objective that averages over the \emph{aleatoric} uncertainty without explicit treatment
of epistemic uncertainty. In this paper, we first present a method to explicitly
estimate epistemic uncertainty around the value function and then argue for
\emph{epistemic risk-aware objectives} as a unified framework for tackling problems in
which risk-neutrality leads to sub-optimal solutions.

We motivate the need for risk-aware objectives with two concrete practical
tasks: online exploration and offline optimization. In online exploration, the MDP's
reward signal is sparse and standard RL algorithms based on maximizing expected return
converge to a suboptimal solution even in simple tasks \citep{raffin_smooth_2021}. In
offline optimization, the RL agent does not interact with the MDP and solely relies on a
dataset with limited support; in this case, standard RL algorithms without additional
regularization are known to diverge \citep{levine_offline_2020}. While each of these
problems have been tackled individually in the past, we propose a unified solution by
quantifying epistemic uncertainty and optimizing a simple risk-aware objective:
risk-seeking to encourage exploration in the absence of a reward signal, or risk-averse
for explicit regularization in offline optimization. The two behaviors are controlled by
a single hyperparameter, thus the same algorithm can be applied to both exploration and
offline problems.

In order to model epistemic uncertainty, we adopt the model-based RL (MBRL)
paradigm, in which the RL agent learns a probabilistic model of the MDP
\citep{sutton_dyna_1991}. For tabular RL problems with finite state-action spaces,
provably efficient RL algorithms leverage the learned model of the MDP to derive
epistemic-uncertainty-based rewards that instill exploratory behaviour
\citep{strehl_analysis_2008,jaksch_near-optimal_2010}. Beyond these tabular RL
approaches, modern deep learning MBRL methods quantify epistemic and aleatoric
uncertainty in the learned MDP dynamics
\citep{depeweg_decomposition_2018,chua_deep_2018} and leverage them to optimize the
policy \citep{curi_efficient_2020}. Still, proper uncertainty quantification of
long-term return predictions remains a challenging trade-off between accuracy and
tractable probabilistic inference \citep{deisenroth_pilco_2011}. Despite these
challenges, it has been shown that quantification of uncertainty around the policy's
value enables risk-awareness, i.e., reasoning about the long-term risk of rolling out a
policy. Promising results have been reported for both risk-seeking
\citep{deisenroth_pilco_2011,fan_model-based_2021} and risk-averse
\citep{zhou_deep_2020,yu_mopo_2020} policy optimization.

Similar to prior work in MBRL, we use a Bayesian approach to characterize
uncertainty in the MDP via a posterior distribution \citep{dearden_model_1999}. This
distributional perspective of the RL environment induces distributions over functions of
interest for solving the RL problem, e.g., the value function. Our perspective differs
from \emph{distributional} RL \citep{bellemare_distributional_2017}, whose main object
of study is the \emph{aleatoric} noise around the \emph{return}. As such, distributional
RL models \emph{aleatoric} uncertainty, whereas our Bayesian MBRL perspective focuses on
the \emph{epistemic} uncertainty arising from finite data of the underlying MDP. 


In this work, we analyze the \emph{variance} of the distribution over value
functions and design an algorithm to estimate it. Our method relies on dynamic
programming and the well-known Bellman equation \citep{bellman_dynamic_1957}. In
particular, previous work by \citet{odonoghue_uncertainty_2018,zhou_deep_2020} showed
that the dynamic programming solution to a so-called uncertainty Bellman equation (UBE)
is a guaranteed upper-bound on the posterior variance of the value function. Our
theoretical result is a new UBE whose solution is \emph{exactly} the posterior variance
of the value function, thus closing the theoretical gap in previous work. Beyond the
theoretical analysis, which we previously published in \citep{luis_model-based_2023}, in
this work we further identify limitations of naive applications of our theoretical
result with neural networks as function approximators and propose a novel solution. Our
aim is to devise a general algorithm for RL problems where optimizing for risk-neutral
objectives is known to underperform. In particular, we consider two such problems:
exploration, which typically requires risk-seeking behaviour, and offline optimization,
which benefits from risk-averse objectives.

~\\
\paragraph{Our contribution.} We propose a novel MBRL algorithm for continuous
control called $Q$-Uncertainty Soft Actor-Critic (QU-SAC) that can be applied for either
risk-seeking or risk-averse policy optimization with minimal changes. This work is an
extension to the conference paper \citep{luis_model-based_2023} that introduces the core
theory around the UBE and a preliminary continuous control algorithm evaluated in a
limited suite of online RL problems. We extend our prior work by: (1) identifying
limitations of the direct application of the UBE theory under function approximation;
(2) improving the performance of the previous control algorithm in
\citet{luis_model-based_2023} via a new approximation to the UBE solution, which in
addition is easier to implement and less computationally demanding; and (3) conducting
several new experiments on exploration tasks from the DeepMind Control (DMC) suite
\citep{tunyasuvunakool_dm_control_2020} and offline RL tasks from the D4RL benchmark
\citep{fu_d4rl_2020}.

To the best of our knowledge, QU-SAC is the first uncertainty-based MBRL
algorithm that flexibly handles both online exploration and offline conservative
optimization.

\subsection{Related work}
\paragraph{Model-free Bayesian RL.} Model-free approaches to Bayesian RL directly model the
distribution over values, e.g., with normal-gamma priors \citep{dearden_bayesian_1998}, Gaussian
Processes \citep{engel_bayes_2003} or ensembles of neural networks \citep{osband_deep_2016}.
\citet{jorge_inferential_2020} estimates value distributions using a backwards induction framework,
while \citet{metelli_propagating_2019} propagates uncertainty using Wasserstein barycenters.
\citet{fellows_bayesian_2021} showed that, due to bootstrapping, model-free Bayesian methods infer a
posterior over Bellman operators rather than values.

\paragraph{Model-based Bayesian RL.} Model-based Bayesian RL maintains a posterior over
plausible MDPs given the available data, which induces a distribution over values. The
MDP uncertainty is typically represented in the one-step transition model as a
by-product of model-learning. For instance, the well-known PILCO algorithm by
\citet{deisenroth_pilco_2011} learns a Gaussian Process (GP) model of the transition
dynamics and integrates over the model's total uncertainty to obtain the expected
values. In order to scale to high-dimensional continuous control problems,
\citet{chua_deep_2018} proposes PETS, which uses ensembles of probabilistic neural
networks (NNs) to capture both aleatoric and epistemic uncertainty as first proposed by
\citet{lakshminarayanan_simple_2017}. Both approaches propagate model uncertainty during
policy evaluation and improve the policy via greedy exploitation over this
model-generated noise. Dyna-style \citep{sutton_dyna_1991} actor-critic algorithms have
been paired with model-based uncertainty estimates for improved performance in both
online \citep{buckman_sample-efficient_2018,zhou_efficient_2019} and offline
\citep{yu_mopo_2020,kidambi_morel_2020} RL.

\paragraph{Online RL - Optimism.} To balance exploration and exploitation, provably-efficient RL
algorithms based on \emph{optimism in the face of the uncertainty} (OFU)
\citep{auer_logarithmic_2006,jaksch_near-optimal_2010} rely on building upper-confidence
(optimistic) estimates of the true values. These optimistic values correspond to a modified MDP
where the rewards are enlarged by an uncertainty bonus, which encourages exploration. In practice,
however, the aggregation of optimistic rewards may severely over-estimate the true values, rendering
the approach inefficient \citep{osband_why_2017}. \citet{odonoghue_uncertainty_2018} shows that
methods that approximate the variance of the values can result in much tighter upper-confidence
bounds, while \citet{ciosek_better_2019} demonstrates their use in complex continuous control
problems. Similarly, \citet{chen_ucb_2017} proposes a model-free ensemble-based approach to estimate
the variance of values.

\paragraph{Offline RL - Pessimism.} In offline RL, the policy is optimized solely from offline
(static) data rather than from online interactions with the environment \citep{levine_offline_2020}.
A primary challenge in this setting is known as \emph{distribution shift}, which refers to the shift
between the state-action distribution of the offline dataset and that of the learned policy. The
main underlying issue with distribution shifts in offline RL relates to querying value functions
out-of-distribution (OOD) with no opportunity to correct for generalization errors via online
interactions (as in the typical RL setting). One prominent technique to deal with distribution
shifts is known as \emph{conservatism} or \emph{pessimism}, where a pessimistic value function
(typically a lower bound of the true values) is learned by regularizing OOD actions
\citep{kumar_conservative_2020,bai_pessimistic_2022}. Model-based approaches to pessimism can be
sub-divided into uncertainty-free \citep{yu_combo_2021,rigter_rambo-rl_2022} and uncertainty-based
methods \citep{yu_mopo_2020,kidambi_morel_2020,jeong_conservative_2023}. While uncertainty-free
pessimism circumvents the need to explicitly estimate the uncertainty, the current state-of-the-art
method CBOP \citep{jeong_conservative_2023} is uncertainty-based. Our QU-SAC algorithm falls into
the uncertainty-based category and differentiates from prior work over which uncertainty it
estimates: MOPO \citep{yu_mopo_2020} uses the maximum aleatoric standard deviation of a dynamics
ensemble forward prediction, MOREL \citep{kidambi_morel_2020} is similar but uses the maximum
pairwise difference of the mean predictions, CBOP \citep{jeong_conservative_2023} instead does
approximate Bayesian inference directly on the $Q$-value predictions conditioned on empirical
(bootstrapped) return estimates. Instead, QU-SAC learns a Bayesian estimate of the $Q$-values
variance via approximately solving a UBE. To the best of our knowledge, this is the first time a
UBE-based algorithm is used for offline RL.

\paragraph{Unified offline / online RL.} The closest body of work in which
offline and online optimization are treated under the same umbrella is that of
offline-to-online RL, also known as online fine-tuning
\citep{lee_offline--online_2022,nakamoto_cal-ql_2023}. \citet{lei_uni-o4_2024} unifies
the offline and online phases under the same objective function, but the training
procedure between both phases differs, adding further complexity.
\citet{zhao_improving_2023} uses the same base algorithm (SAC) in both phases, but
risk-awareness is procured by different methods: CQL \citep{kumar_conservative_2020} in
the offline phase, and SUNRISE \citep{lee_sunrise_2021} for online fine-tuning.

\paragraph{Uncertainty in RL.} Interest about the higher moments of the \emph{return} of a policy
dates back to the work of \citet{sobel_variance_1982}, showing these quantities obey a Bellman
equation. Methods that leverage these statistics of the return are known as \emph{distributional} RL
\citep{tamar_temporal_2013,bellemare_distributional_2017}. Instead, we focus specifically on
estimating and using the \emph{variance} of the \emph{expected return} for policy optimization. A
key difference between the two perspectives is the type of uncertainty they model: distributional RL
models the \emph{aleatoric} uncertainty about the returns, which originates from the aleatoric noise
of the MDP transitions and the stochastic policy; our perspective studies the \emph{epistemic}
uncertainty about the value function, due to incomplete knowledge of the MDP. Provably efficient RL
algorithms use this isolated epistemic uncertainty as a signal to balance exploring the environment
and exploiting the current knowledge.

\paragraph{UBE-based RL.} \citet{odonoghue_uncertainty_2018} proposes a UBE whose fixed-point
solution converges to a guaranteed upper-bound on the posterior variance of the value function in
the tabular RL setting. This approach was implemented in a model-free fashion using the DQN
\citep{mnih_playing_2013} architecture and showed performance improvements in Atari games. Follow-up
work by \citet{markou_bayesian_2019} empirically shows that the upper-bound is loose and the
resulting over-approximation of the variance impacts negatively the regret in tabular exploration
problems. \citet{zhou_deep_2020} proposes a modified UBE with a tighter upper-bound on the value
function, which is then paired with proximal policy optimization (PPO)
\citep{schulman_proximal_2017} in a conservative on-policy model-based approach to solve
continous-control tasks. Our QU-SAC algorithm integrates UBE-based uncertainty quantification into a
model-based soft actor-critic (SAC) \citep{haarnoja_soft_2018} architecture similar to
\citet{janner_when_2019,froehlich_-policy_2022}. 

\section{Problem Statement}
\label{sec:problem_statement}
We consider an agent that acts in an infinite-horizon MDP $\mathcal{M} = \set{\mathcal{S},
\mathcal{A}, p, \rho, r, \gamma}$ with finite state space $\abs{\mathcal{S}} = S$, finite action
space $\abs{\mathcal{A}} = A$, unknown transition function $p: \mathcal{S} \times \mathcal{A} \to
\Delta(S)$ that maps states and actions to the $S$-dimensional probability simplex, an initial state
distribution $\rho: \mathcal{S} \to [0,1]$, a known and bounded reward function $r: \mathcal{S}
\times \mathcal{A} \to \R$, and a discount factor $\gamma \in [0,1)$. Although we consider a known
reward function, the main theoretical results can be easily extended to the case where it is learned
alongside the transition function (see \cref{app:unknown_rewards}). The one-step dynamics $p(s' \mid
s,a)$ denote the probability of going from state $s$ to state $s'$ after taking action $a$. In
general, the agent selects actions from a stochastic policy $\pi: \mathcal{S} \to \Delta(A)$ that
defines the conditional probability distribution $\pi(a\mid s)$. At each time step, the agent is in
some state $s$, selects an action $a \sim \pi(\cdot \mid s)$, receives a reward $r(s,a)$, and
transitions to a next state $s' \sim p(\cdot \mid s,a)$. We define the value function $V^{\pi,p}:
\mathcal{S} \to \R$ of a policy $\pi$ and transition function $p$ as the expected sum of discounted
rewards under the MDP dynamics,
\begin{equation}
  V^{\pi, p}(s) = \E_{\tau \sim P}\bracket{\sum\nolimits_{h=0}^{\infty}\gamma^hr(s_h,a_h) \given s_0 = s},
\end{equation}
where the expectation is taken under the random trajectories $\tau$ drawn from the trajectory
distribution $P(\tau) = \prod_{h=0}^{\infty} \pi(a_h \mid s_h)p(s_{h+1}\mid s_h, a_h)$.

We consider a Bayesian setting similar to previous work by
\citet{odonoghue_uncertainty_2018,odonoghue_variational_2021,zhou_deep_2020}, in which the
transition function $p$ is a random variable with some known prior distribution $\Phi(p)$. As the
agent interacts in $\mathcal{M}$, it collects data\footnote{We omit time-step subscripts and refer
to dataset $\mathcal{D}$ as the collection of all available transition data.} $\mathcal{D}$ and
updates its posterior belief $\Phi(p \mid \mathcal{D})$ via Bayes' rule. In what follows, we omit
further qualifications and refer to $\Phi$ as the posterior over transition functions. Such
distribution over transition functions naturally induces a distribution over value functions. The
main focus of this paper is to study methods that estimate the \emph{variance} of the value function
$V^{\pi,p}$ under $\Phi$, namely $\V_{p \sim \Phi} \bracket{V^{\pi, p}(s)}$. Our theoretical results
extend to state-action value functions (see \cref{app:extension_q_values}). The motivation behind
studying this quantity is its potential for risk-aware optimization.

A method to estimate an upper-bound the variance of $Q$-values by solving a UBE was introduced by
\citet{zhou_deep_2020}. Their theory holds for a class of MDPs where the value functions and
transition functions are uncorrelated. This family of MDPs is characterized by the following
assumptions:
\begin{assumption}[Parameter Independence \citep{dearden_model_1999}]
    \label{assumption:transitions}
    The posterior over the random vector $p(\cdot \mid s, a)$ is independent for each pair $(s,
    a) \in \mathcal{S} \times \mathcal{A}$.
\end{assumption}
\begin{assumption}[Acyclic MDP \citep{odonoghue_uncertainty_2018}]
    \label{assumption:acyclic}
    For any realization of $p$, the MDP $\mathcal{M}$ is a directed acyclic graph, i.e., states are
    not visited more than once in any given episode.
\end{assumption}
\Cref{assumption:transitions} is satisfied when modelling state transitions as independent
categorical random variables for every pair $(s, a)$, with the unknown parameter vector $p(\cdot
\mid s, a)$ under a Dirichlet prior \citep{dearden_model_1999}. \Cref{assumption:acyclic} is
non-restrictive as any finite-horizon MDP with cycles can be transformed into an equivalent
time-inhomogeneous MDP without cycles by adding a time-step variable $h$ to the state-space. Since
the state-space is finite-dimensional, for infinite-horizon problems we consider the existence of a
terminal (absorbing) state that is reached within a finite number of steps. The direct consequence
of these assumptions is that the random variables $V^{\pi,p}(s')$ and $p(s' \mid s,a)$ are
independent (see \cref{lemma:independence_from_assumptions,lemma:uncorrelated_property} in
\cref{app:proofshm_ube} for a formal proof). 

Other quantities of interest are the posterior mean transition function starting from the current
state-action pair $(s,a)$, 
\begin{equation}
  \label{eq:mean_model}
  \bar{p}(\cdot \mid s,a) = \E_{p \sim \Phi}\bracket{p(\cdot \mid s,a)}, 
\end{equation}
and the posterior mean value function for any $s \in \mathcal{S}$,
\begin{equation}
  \label{eq:mean_value_function}
  \bar{V}^{\pi}(s) = \E_{p \sim \Phi}\bracket{V^{\pi,p}(s)}.
\end{equation}
Note that $\bar{p}$ is a transition function that combines both aleatoric \emph{and} epistemic
uncertainty. Even if we limit the posterior $\Phi$ to only include deterministic transition
functions, $\bar{p}$ remains a stochastic transition function due to the epistemic uncertainty.

In \citet{zhou_deep_2020}, \emph{local} uncertainty is defined as
\begin{equation}
  \label{eq:pombu_rewards}
  w(s) = \V_{p\sim \Phi}
  \bracket{\sum\nolimits_{a, s'}\pi(a \mid s) p(s' \mid s,a) \bar{V}^\pi(s')},
\end{equation}
which captures variability of the posterior mean value function at the next state $s'$. Based on
this local uncertainty, \citet{zhou_deep_2020} proposes the UBE
\begin{equation}
  \label{eq:ube_pombu}
  W^\pi(s) = 
  \gamma ^ 2w(s) + \gamma^2\sum_{a, s'}\pi(a \mid s)\bar{p}(s' \mid s,a) W^\pi(s'),
\end{equation}
that propagates the local uncertainty using the posterior mean dynamics. It was proven that the
fixed-point solution of \cref{eq:ube_pombu} is an upper-bound of the epistemic variance of the
values, i.e., it satisfies $W^\pi(s) \geq \V_{p \sim \Phi} \bracket{V^{\pi, p}(s)}$ for all $s$.

\section{Uncertainty Bellman Equation}
In this section, we build a new UBE whose fixed-point solution is \emph{equal} to the variance of
the value function and we show explicitly the gap between \cref{eq:ube_pombu} and $\V_{p \sim
\Phi}\bracket{V^{\pi, p}(s)}$.

The values $V^{\pi,p}$ are the fixed-point solution to the Bellman expectation equation, which
relates the value of the current state $s$ with the value of the next state $s'$. Further, under
\cref{assumption:transitions,assumption:acyclic}, applying the expectation operator to the Bellman
recursion results in $\bar{V}^{\pi}(s) = V^{\pi, \bar{p}}(s)$. The Bellman recursion propagates
knowledge about the \emph{local} rewards $r(s,a)$ over multiple steps, so that the value function
encodes the \emph{long-term} value of states if we follow policy $\pi$. Similarly, a UBE is a
recursive formula that propagates a notion of \emph{local uncertainty}, $u(s)$, over multiple steps.
The fixed-point solution to the UBE, which we call the $U$-values, encodes the \emph{long-term
epistemic uncertainty} about the values of a given state.

Previous formulations by \citet{odonoghue_uncertainty_2018,zhou_deep_2020} differ only on their
definition of the local uncertainty and result on $U$-values that upper-bound the posterior
variance of the values. The first key insight of our paper is that we can define $u$ such that the
$U$-values converge exactly to the variance of values. This result is summarized in the following
theorem:
\begin{restatable}{theorem}{ube}
  \label{thm:ube}
  Under \cref{assumption:transitions,assumption:acyclic}, for any $s \in \mathcal{S}$ and policy
  $\pi$, the posterior variance of the value function, $U^\pi = \V_{p \sim \Phi}
  \bracket{V^{\pi,p}}$ obeys the uncertainty Bellman equation
  \begin{equation}
    \label{eq:bellman_exact}
    U^\pi(s) = 
    \gamma ^ 2u(s) + \gamma^2\sum_{a, s'}\pi(a \mid s)\bar{p}(s' \mid s,a) U^\pi(s'),
  \end{equation}
  where $u(s)$ is the local uncertainty defined as
  \begin{equation}
    \label{eq:bellman_exact_reward}
    u(s) = \V_{a, s' \sim \pi, \bar{p}} \bracket{\bar{V}^\pi(s')} -
    \E_{p \sim \Phi} \bracket{\V_{a, s' \sim \pi, p} \bracket{V^{\pi, p}(s')}}.
  \end{equation}
\end{restatable}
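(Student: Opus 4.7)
The plan is to apply the law of total variance twice, corresponding to the two ways one can decompose the triple $(p,a,s')$. Starting from the Bellman equation $V^{\pi,p}(s) = \sum_a\pi(a\mid s)\bigl[r(s,a) + \gamma\sum_{s'}p(s'\mid s,a)V^{\pi,p}(s')\bigr]$ and observing that the reward is deterministic under $\Phi$, taking $\V_{p\sim\Phi}$ of both sides reduces the theorem to showing
\begin{equation*}
\V_{p\sim\Phi}[Z(p)] = u(s) + \sum_{a,s'}\pi(a\mid s)\bar p(s'\mid s,a)U^\pi(s'), \quad Z(p) := \E_{a,s'\mid p}[V^{\pi,p}(s')].
\end{equation*}

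Next I would consider the joint random triple $(p,a,s')$ with $p\sim\Phi$, $a\sim\pi(\cdot\mid s)$, and $s'\sim p(\cdot\mid s,a)$, so that $V^{\pi,p}(s')$ is a well-defined random variable on this joint space and the marginal of $(a,s')$ is $\pi(a\mid s)\bar p(s'\mid s,a)$. Conditioning first on $p$ gives
\begin{equation*}
\V_{(p,a,s')}[V^{\pi,p}(s')] = \E_{p}\bigl[\V_{a,s'\mid p}[V^{\pi,p}(s')]\bigr] + \V_p[Z(p)],
\end{equation*}
which isolates $\V_p[Z(p)]$ as a residual. Conditioning instead on $(a,s')$ gives
\begin{equation*}
\V_{(p,a,s')}[V^{\pi,p}(s')] = \E_{a,s'}\bigl[\V_{p\mid a,s'}[V^{\pi,p}(s')]\bigr] + \V_{a,s'}\bigl[\E_{p\mid a,s'}[V^{\pi,p}(s')]\bigr].
\end{equation*}
The assumptions enter at this step: \cref{lemma:independence_from_assumptions,lemma:uncorrelated_property} (which follow from \cref{assumption:transitions,assumption:acyclic}) imply that $V^{\pi,p}(s')$ is independent of $p(\cdot\mid s,a)$. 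Because $(a,s')$ only carries information about $p(\cdot\mid s,a)$, the conditional law of $V^{\pi,p}(s')$ given $(a,s')$ coincides with its marginal, so $\V_{p\mid a,s'}[V^{\pi,p}(s')]=U^\pi(s')$ and $\E_{p\mid a,s'}[V^{\pi,p}(s')]=\bar V^\pi(s')$.

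Equating the two decompositions and solving for $\V_p[Z(p)]$ yields
\begin{equation*}
\V_p[Z(p)] = \V_{a,s'\sim\pi,\bar p}[\bar V^\pi(s')] - \E_{p}\bigl[\V_{a,s'\mid p}[V^{\pi,p}(s')]\bigr] + \E_{a,s'\sim\pi,\bar p}[U^\pi(s')],
\end{equation*}
where the first two terms are exactly $u(s)$ as in \cref{eq:bellman_exact_reward} and the third is $\sum_{a,s'}\pi(a\mid s)\bar p(s'\mid s,a)U^\pi(s')$; multiplying by $\gamma^2$ delivers \cref{eq:bellman_exact}. The step I expect to be the only genuine subtlety is the collapse of the conditional laws given $(a,s')$ to marginal ones: it uses both assumptions, since parameter independence decouples $p(\cdot\mid s,a)$ from the transitions at every other state-action pair, and acyclicity guarantees that $V^{\pi,p}(s')$ does not functionally depend on $p(\cdot\mid s,a)$. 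Without either, observing $s'$ would update the posterior over the very transitions that determine $V^{\pi,p}(s')$, breaking the clean recursion.
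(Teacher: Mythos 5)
Your proof is correct, but it is organized differently from the paper's. The paper's argument (\cref{app:proofshm_ube}) is purely algebraic: it expands $\V_{p\sim\Phi}[V^{\pi,p}(s)]$ as $\E[Y^2]-(\E[Y])^2$ with $Y=\gamma\sum_{a,s'}p(a,s'\mid s)V^{\pi,p}(s')$ (\cref{lemma:variance_decomp_no_assumptions}), repeatedly applies the identity $(\E[X])^2=\E[X^2]-\V[X]$ to the inner sums over $(a,s')$, and uses only the \emph{zero-covariance} consequence of the assumptions (\cref{lemma:uncorrelated_property}) to pull $\E_{p\sim\Phi}$ past the transition probabilities, both for $V^{\pi,p}(s')$ and its square. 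You instead place $(p,a,s')$ on one joint probability space and apply the law of total variance twice, conditioning once on $p$ and once on $(a,s')$; the assumptions enter as the statement that the conditional law of $V^{\pi,p}(s')$ given $(a,s')$ collapses to its marginal. That collapse is a genuinely stronger fact than zero covariance, but it is available: the paper's \cref{lemma:independence_from_assumptions} actually proves full independence of $p(\cdot\mid s,a)$ and $V^{\pi,p}(s')$ (via factorization of the trajectory probability under \cref{assumption:transitions,assumption:acyclic}), and your justification---parameter independence confines the posterior tilt induced by observing $s'$ to the component $p(\cdot\mid s,a)$, on which $V^{\pi,p}(s')$ does not depend by acyclicity---is exactly right and correctly flagged as the one subtle step. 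What each route buys: yours is less computational and makes the ``total $=$ epistemic $+$ aleatoric'' decomposition of \cref{eq:interpretation} the engine of the proof (indeed, your second total-variance identity applied to $\bar V^\pi$ in place of $V^{\pi,p}$ essentially yields \cref{thm:connection_uncertainties} for free); the paper's route needs only uncorrelatedness and linearity, and its intermediate expressions \cref{eq:thm1_firsterm,eq:thm1_seconderm} are reused verbatim in \cref{app:proofshm_connections}.
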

\begin{proof}
  See \cref{app:proofshm_ube}.
\end{proof}
One may interpret the $U$-values from \cref{thm:ube} as the associated state-values of an alternate
\emph{uncertainty MDP}, $\mathcal{U} = \set{\mathcal{S}, \mathcal{A}, \bar{p}, \rho, \gamma
^2u, \gamma ^2}$, where the agent receives uncertainty rewards and transitions according to the
mean dynamics $\bar{p}$.

A key difference between $u$ and $w$ is how they represent epistemic uncertainty: in the former,
it appears only within the first term, through the one-step variance over $\bar{p}$; in the
latter, the variance is computed over $\Phi$. While the two perspectives may seem fundamentally
different, in the following theorem we present a clear relationship that connects \cref{thm:ube}
with the upper bound \cref{eq:ube_pombu}.
\begin{restatable}{theorem}{connections}
  \label{thm:connection_uncertainties}
  Under \cref{assumption:transitions,assumption:acyclic}, for any $s \in \mathcal{S}$ and policy
  $\pi$, it holds that $u(s) = w(s) - g(s)$, where $g(s) = \E_{p \sim
  \Phi}\bracket{\V_{a,s' \sim \pi, p} \bracket{V^{\pi,p}(s')}- \V_{a,s' \sim \pi, p}
  \bracket{\bar{V}^\pi(s')}}$. Furthermore, we have that the gap $g(s)$ is non-negative, thus $u(s) \leq w(s)$.
\end{restatable}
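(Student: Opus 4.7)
My plan combines two applications of the law of total variance with the independence of $V^{\pi,p}(s')$ from $p(\cdot\mid s,a)$ guaranteed by \cref{assumption:transitions,assumption:acyclic}. I would handle the identity and the non-negativity separately, reusing the same independence machinery.

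For the identity $u(s)=w(s)-g(s)$, I first rewrite $\V_{a,s'\sim\pi,\bar p}[\bar V^\pi(s')]$ using the law of total variance. Viewing $(p,a,s')$ as a joint sample with $p\sim\Phi$, $a\sim\pi(\cdot\mid s)$, $s'\sim p(\cdot\mid s,a)$, the $s'$-marginal is $\pi,\bar p$; since $\bar V^\pi$ depends only on $s'$, the joint variance of $\bar V^\pi(s')$ equals $\V_{a,s'\sim\pi,\bar p}[\bar V^\pi(s')]$. Conditioning instead on $p$ yields
\[
    \V_{a,s'\sim\pi,\bar p}\bracket{\bar V^\pi(s')}
    = \E_{p\sim\Phi}\bracket{\V_{a,s'\sim\pi,p}[\bar V^\pi(s')]}
    + \V_{p\sim\Phi}\bracket{\E_{a,s'\sim\pi,p}[\bar V^\pi(s')]},
\]
and the last term is exactly $w(s)$ by definition. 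Substituting this expression into the formula for $u(s)$ in \cref{thm:ube} gives $u(s) = w(s) - g(s)$ directly.

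For $g(s)\ge 0$, I would decompose $V^{\pi,p}(s')=\bar V^\pi(s')+Z(s',p)$ with $Z(s',p)=V^{\pi,p}(s')-\bar V^\pi(s')$, so that $\E_{p\sim\Phi}[Z(s',p)]=0$ for every fixed $s'$. For any fixed $p$ the bilinearity of variance gives
\[
    \V_{a,s'\sim\pi,p}[V^{\pi,p}(s')] - \V_{a,s'\sim\pi,p}[\bar V^\pi(s')]
    = \V_{a,s'\sim\pi,p}[Z(s',p)] + 2\,\Cov_{a,s'\sim\pi,p}\bracket{\bar V^\pi(s'),\, Z(s',p)}.
\]
Taking $\E_{p\sim\Phi}$ on both sides, the first term becomes a non-negative expected variance. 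The whole claim then reduces to showing that the expected covariance vanishes.

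The main obstacle is precisely this covariance cancellation. Expanding the covariance and pushing $\E_{p\sim\Phi}$ through the sums over $(a,s',\tilde a,s'')$ produces terms of the form $\E_{p\sim\Phi}\bracket{p(s'\mid s,a)\,p(s''\mid s,\tilde a)\,Z(s'',p)}$, and I want to peel off $Z(s'',p)$ using $\E_{p\sim\Phi}[Z(s'',p)]=0$. Pairwise independence is insufficient because $p(\cdot\mid s,a)$ and $p(\cdot\mid s,\tilde a)$ are \emph{not} mutually independent when $a=\tilde a$ (they are coordinates of the same simplex vector). I would instead invoke the stronger joint independence that Lemmas~1--2 in \cref{app:proofshm_ube} deliver under \cref{assumption:transitions,assumption:acyclic}: $Z(s'',p)$ depends only on transitions from states reachable from $s''$, which by acyclicity excludes $s$, making $Z(s'',p)$ independent of the entire family $\{p(\cdot\mid s,a)\}_{a\in\mathcal{A}}$. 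With this factorization every term in the expansion is killed by $\E_{p\sim\Phi}[Z(s'',p)]=0$, yielding $g(s)=\E_{p\sim\Phi}[\V_{a,s'\sim\pi,p}[Z(s',p)]]\ge 0$ as required.
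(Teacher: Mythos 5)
Your proposal is correct and follows essentially the same route as the paper: the identity $u(s)=w(s)-g(s)$ is the law of total variance for $\bar V^\pi(s')$ conditioned on the transitions out of $s$ (which the paper derives step by step in \cref{lemma:pombu_local_uncertainty,lemma:rhs_pombu_uncertainty_first,lemma:rhs_pombu_uncertainty_second} rather than quoting the law directly), and the non-negativity reduces, via the independence of $V^{\pi,p}(s')$ from the transitions out of $s$, to showing $g(s)=\E_{p\sim\Phi}\bracket{\V_{a,s'\sim\pi,p}\bracket{V^{\pi,p}(s')-\bar V^\pi(s')}}\ge 0$, which is exactly the paper's \cref{lemma:inflated_uncertainty}. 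Your covariance-cancellation argument, including the correct observation that one needs joint (not merely pairwise) independence of the family $\{p(\cdot\mid s,a)\}_{a\in\mathcal{A}}$ from the downstream values, is an equivalent rephrasing of that lemma's two-way computation of $\E[\V[Y\mid X]]$.
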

\begin{proof}
  See \cref{app:proofshm_connections}.
\end{proof}

The gap $g(s)$ of \cref{thm:connection_uncertainties} can be interpreted as
the \emph{average difference} of aleatoric uncertainty about the next values with respect to the
mean values. The gap vanishes only if the epistemic uncertainty goes to zero, or if the MDP and
policy are both deterministic.

We directly connect \cref{thm:ube,thm:connection_uncertainties} via the equality
\begin{equation}
  \label{eq:interpretation}
	\underbrace{\V_{a, s' \sim \pi, \bar{p}} \bracket{\bar{V}^\pi(s')}}_\textrm{total} = \underbrace{w(s)}_\textrm{epistemic} + \underbrace{\E_{p \sim \Phi}\bracket{\V_{a,s' \sim \pi, p} \bracket{\bar{V}^\pi(s')}}}_\textrm{aleatoric},
\end{equation}
which helps us analyze our theoretical results. The uncertainty reward defined in
\cref{eq:bellman_exact_reward} has two components: the first term corresponds to the \emph{total
uncertainty} about the \emph{mean} values of the next state, which is further decomposed in
\cref{eq:interpretation} into an epistemic and aleatoric components. When the epistemic uncertainty
about the MDP vanishes, then $w(s) \to 0$ and only the aleatoric component remains. Similarly,
when the MDP and policy are both deterministic, the aleatoric uncertainty vanishes and we have
$\V_{a, s' \sim \pi, \bar{p}} \bracket{\bar{V}^\pi(s')} = w(s)$. The second term of
\cref{eq:bellman_exact_reward} is the \emph{average aleatoric uncertainty} about the value of the
next state. When there is no epistemic uncertainty, this term is non-zero and exactly equal to the
alectoric term in \cref{eq:interpretation} which means that $u(s) \to 0$. Thus, we can interpret
$u(s)$ as a \emph{relative} local uncertainty that subtracts the average aleatoric noise out of
the total uncertainty around the mean values. Perhaps surprisingly, our theory allows negative
$u(s)$ (see \cref{subsec:toy_example} for a concrete example).

Through \cref{thm:connection_uncertainties} we provide an alternative proof of why the UBE
\cref{eq:ube_pombu} results in an upper-bound of the variance, specified by the next corollary.
\begin{corollary}
  \label{cor:pombu}
  Under \cref{assumption:transitions,assumption:acyclic}, for any $s \in \mathcal{S}$ and policy
  $\pi$, it holds that the solution to the uncertainty Bellman equation \cref{eq:ube_pombu}
  satisfies $W^\pi(s) \geq U^\pi(s)$.
\end{corollary}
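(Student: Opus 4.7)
The plan is to subtract the two uncertainty Bellman equations and show that the difference is itself the fixed point of a Bellman equation with non-negative ``rewards'', hence non-negative.

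More concretely, define $D^\pi(s) \coloneqq W^\pi(s) - U^\pi(s)$. Since $W^\pi$ solves \eqref{eq:ube_pombu} and, by \cref{thm:ube}, $U^\pi$ solves \eqref{eq:bellman_exact}, subtracting the two recursions (both use the same transition kernel $\bar{p}$ and the same discount $\gamma^2$) yields
\begin{equation*}
  D^\pi(s) = \gamma^2 \bigl(w(s) - u(s)\bigr) + \gamma^2 \sum_{a, s'} \pi(a \mid s)\, \bar{p}(s' \mid s, a)\, D^\pi(s').
\end{equation*}
By \cref{thm:connection_uncertainties}, $w(s) - u(s) = g(s) \ge 0$ for all $s$. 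Hence $D^\pi$ is the fixed point of a standard Bellman evaluation operator on the mean-dynamics MDP with discount $\gamma^2 \in [0,1)$ and non-negative per-step reward $\gamma^2 g(s)$.

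Invoking the usual contraction/monotonicity argument for the Bellman expectation operator (or equivalently writing $D^\pi$ as the convergent series $\sum_{h=0}^\infty \gamma^{2(h+1)} \E[g(s_h) \mid s_0 = s]$ under $\bar{p}$ and $\pi$), the non-negativity of $g$ immediately implies $D^\pi(s) \ge 0$ for all $s \in \mathcal{S}$, which is the desired inequality $W^\pi(s) \ge U^\pi(s)$.

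The main obstacle is essentially bookkeeping: one must confirm that both UBEs admit unique fixed points so that the subtraction is well-defined. This follows because $\gamma^2 < 1$ and the underlying kernel $\bar{p}$ is a proper stochastic transition, making both uncertainty Bellman operators $\gamma^2$-contractions in $\ell_\infty$; under \cref{assumption:acyclic} absorption in finitely many steps gives the same conclusion even without discounting. Beyond this point, the argument is a direct consequence of \cref{thm:connection_uncertainties} and requires no further calculation.
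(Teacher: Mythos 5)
Your proof is correct and takes essentially the same route as the paper: both reduce the claim to the non-negativity of the gap $g(s) = w(s) - u(s)$ from \cref{thm:connection_uncertainties} combined with the standard monotonicity of the Bellman evaluation solution in its reward function (the paper compares returns trajectory-wise; you subtract the two fixed-point equations and unroll the resulting recursion). Your version is, if anything, slightly more careful, since you explicitly note the $\gamma^2$-contraction property that makes both fixed points unique and the subtraction well-defined.
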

\begin{proof}
  The solution to the Bellman equations \cref{eq:bellman_exact,eq:ube_pombu} are the value functions
  under some policy $\pi$ of identical MDPs except for their reward functions. Given two identical
  MDPs $\mathcal{M}_1$ and $\mathcal{M}_2$ differing only on their corresponding reward functions
  $r_1$ and $r_2$, if $r_1 \leq r_2$ for any input value, then for any trajectory $\tau$ we have
  that the returns (sum of discounted rewards) must obey $R_1(\tau) \leq R_2(\tau)$. Lastly, since
  the value functions $V_1^\pi$, $V_2^\pi$ are defined as the expected returns under the same
  trajectory distribution, and the expectation operator preserves inequalities, then we
  have that $R_1(\tau) \leq R_2(\tau) \implies  V_1^\pi \leq V_2^\pi$.
\end{proof}
\Cref{cor:pombu} reaches the same conclusions as \citet{zhou_deep_2020}, but it brings important
explanations about their upper bound on the variance of the value function. First, by
\cref{thm:connection_uncertainties} the upper bound is a consequence of the over approximation of
the reward function used to solve the UBE. Second, the gap between the exact reward function
$u(s)$ and the approximation $w(s)$ is fully characterized by $g(s)$ and brings interesting
insights. In particular, the influence of the gap term depends on the stochasticity of the dynamics
and the policy. In the limit, the term vanishes under deterministic transitions and action
selection. In this scenario, the upper-bound found by \citet{zhou_deep_2020} becomes tight.

Our method returns the exact \emph{epistemic} uncertainty about the values by considering the
inherent aleatoric uncertainty of the MDP and the policy. In a practical RL setting, disentangling
the two sources of uncertainty is key for effective exploration. We are interested in exploring
regions of high epistemic uncertainty, where new knowledge can be obtained. If the variance estimate
fuses both sources of uncertainty, then we may be guided to regions of high uncertainty but with
little information to be gained.

\begin{figure}[t]
  \centering
  \includegraphics[width=0.7\columnwidth]{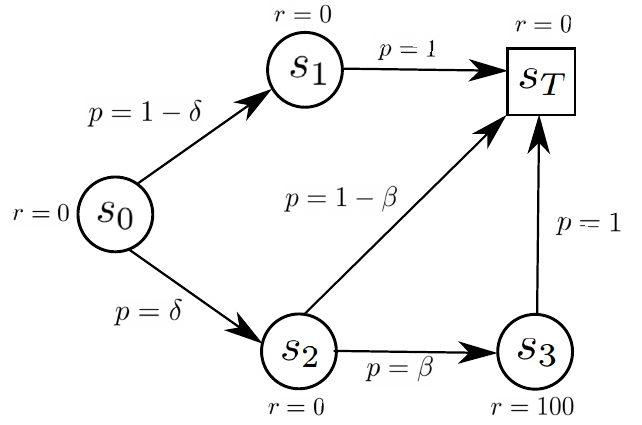}
  \caption{Toy example Markov Reward Process. The random variables $\delta$ and $\beta$ indicate epistemic uncertainty about the MRP. State $s$ is an absorbing (terminal) state.}
  \label{fig:toy_mdp} 
\end{figure}

\subsection{Toy Example}
\label{subsec:toy_example}
To illustrate the theoretical findings of this paper, consider the simple Markov reward process
(MRP) of \cref{fig:toy_mdp}. Assume $\delta$ and $\beta$ to be random variables drawn from a
discrete uniform distribution $\delta \sim \text{Unif}(\set{0.7, 0.6})$ and $\beta \sim
\text{Unif}(\set{0.5, 0.4})$. As such, the distribution over possible MRPs is finite and composed of
the four possible combinations of $\delta$ and $\beta$. Note that the example satisfies
\cref{assumption:transitions,assumption:acyclic}. In \cref{tab:uncertainty_estimationoy_example}
we include the results for the uncertainty rewards and solution to the respective UBEs (the results
for $s_1$ and $s_3$ are trivially zero). For state $s_2$, the upper-bound $W^\pi$ is tight and we
have $W^\pi(s_2) = U^\pi(s_2)$. In this case, the gap vanishes not because of lack of stochasticity,
but rather due to lack of epistemic uncertainty about the next-state values. Indeed, the values for
$s_3$ and $s$ are independent of $\delta$ and $\beta$, which results in the gap terms for $s_2$
cancelling out. For state $s_0$ the gap is non-zero and $W^\pi$ overestimates the variance of the
value by $\sim36\%$. Our UBE formulation prescribes a \emph{negative} reward to be propagated in
order to obtain the correct posterior variance.

\begin{table}[t]
\caption{Comparison of local uncertainty rewards and solutions to the UBE associated with the toy
example from \Cref{fig:toy_mdp}. The $U$-values converge to the true posterior variance of the values, while $W^\pi$ obtains an upper-bound.} \label{tab:uncertainty_estimationoy_example}
\begin{tabular}{c|c|c|c|c}
\textbf{States}  & $u(s)$ & $w(s)$ & $W^\pi(s)$ & $U^\pi(s)$ \\
\hline
$s_0$  & $-0.6$ & $5.0$ & $21.3$ & $15.7$ \\
$s_2$  & $25.0$ & $25.0$ & $25.0$ & $25.0$ \\
\end{tabular}
\end{table}

\section{Uncertainty-Aware Policy Optimization}
\label{sec:optimistic_exploration}
In this section, we propose techniques to leverage uncertainty quantification of $Q$-values for both
online and offline RL problems. In what follows, we consider the general setting with unknown
rewards and define $\Gamma$ to be the posterior distribution over MDPs, from which we can sample
both reward and transition functions. Define $\hat{U}^\pi$ to be an estimate of the posterior
variance over $Q$-values for some policy $\pi$. Then, we consider algorithms that perform policy
updates via the following upper (or lower) confidence bound \citep{auer_logarithmic_2006} type of
optimization problem
\begin{equation}
  \label{eq:policy_opt}
  \pi = \argmax\nolimits_\pi \bar{Q}^\pi + \lambda \sqrt{\hat{U}^\pi},
\end{equation}
where $\bar{Q}^\pi$ is the posterior mean value function and $\lambda$ is a risk-awareness
parameter. A positive $\lambda$ corresponds to risk-seeking, optimistic exploration while negative
$\lambda$ denotes risk-averse, pessimistic anti-exploration.

\cref{algorithm:our_algorithm} describes our general framework to estimate $\bar{Q}^\pi$ and
$\hat{U}^\pi$: we sample an ensemble of $N$ MDPs from the current posterior $\Gamma$ in
\cref{line:sample_model} and use it to solve the Bellman expectation equation in
\cref{line:q_function}, resulting in an ensemble of $N$ corresponding $Q$ functions and the
posterior mean $\bar{Q}^\pi$. Lastly, $\hat{U}^\pi$ is estimated in \cref{line:q_variance} via a
generic variance estimation method \texttt{qvariance}. In what follows, we provide concrete
implementations of \texttt{qvariance} both in tabular and continuous problems.

\begin{algorithm}[t]
   \caption{Model-based $Q$-variance estimation}
   \label{algorithm:our_algorithm}
\begin{algorithmic}[1]
  \STATE {\bfseries Input:} Posterior MDP $\Gamma$, policy $\pi$.

  \STATE $\set{p_i, r_i}_{i=1}^{N}$ $\leftarrow$ \texttt{sample\textunderscore mdp}$(\Gamma)$
  \label{line:sample_model}

  \STATE $\bar{Q}^\pi$, $\set{Q_i}_{i=1}^{N}$ $\leftarrow$\texttt{solve\textunderscore
  bellman}$\paren{\set{p_i, r_i}_{i=1}^{N}, \pi}$ \label{line:q_function}

  \STATE $\hat{U}^\pi$ $\leftarrow$ \texttt{qvariance}$\paren{\set{p_i, r_i, Q_i}_{i=1}^{N}, \bar{Q}^\pi, \pi}$ \label{line:q_variance}
\end{algorithmic}
\end{algorithm}

\subsection{Tabular Problems}
\label{subsec:tabular_implementation}
For problems with tabular representations of the state-action space, we implement \texttt{qvariance}
by directly solving the proposed UBE\footnote{For the UBE-based methods we use the equivalent
equations for $Q$-functions, see \cref{app:q_uncertainty_rewards} for
details.}\cref{eq:bellman_exact}, which we denote \texttt{exact-ube}. For this purpose, we impose a
Dirichlet prior on the transition function and a standard Normal prior for the rewards
\citep{odonoghue_making_2019}, which leads to closed-form posterior updates. After sampling $N$
times from the MDP posterior (\cref{line:sample_model}), we obtain the $Q$-functions
(\cref{line:q_function}) in closed-form by solving the corresponding Bellman equation. The
uncertainty rewards are estimated via sample-based approximations of the expectations/variances
therein. Lastly, we solve \cref{eq:policy_opt} via policy iteration until convergence is achieved or
until a maximum number of steps is reached.

~\\
\paragraph{Practical bound.} The choice of a Dirichlet prior violates \cref{assumption:acyclic}. A
challenge arises in this practical setting: \texttt{exact-ube} may result in \emph{negative}
$U$-values, as a combination of (\textit{i}) the assumptions not holding and (\textit{ii}) the
possibility of negative uncertainty rewards. While (\textit{i}) cannot be easily resolved, we
propose a practical upper-bound on the solution of \cref{eq:bellman_exact} such that the resulting
$U$-values are non-negative and hence interpretable as variance estimates. We consider the clipped
uncertainty rewards $\tilde{u} = \max(u_{\min}, u(s))$ with corresponding $U$-values
$\tilde{U}^\pi$. It is straightforward to prove that, if $u_{\min} = 0$, then $W^\pi(s) \geq
\tilde{U}^\pi(s) \geq U^\pi(s)$, which means that using $\tilde{U}^\pi$ still results in a
tighter upper-bound on the variance than $W^\pi$, while preventing non-positive solutions to the
UBE. In what follows, we drop this notation and assume all $U$-values are computed from clipped
uncertainty rewards.

\begin{figure*}[t]
	\centering
  \includegraphics[width=1.0\textwidth]{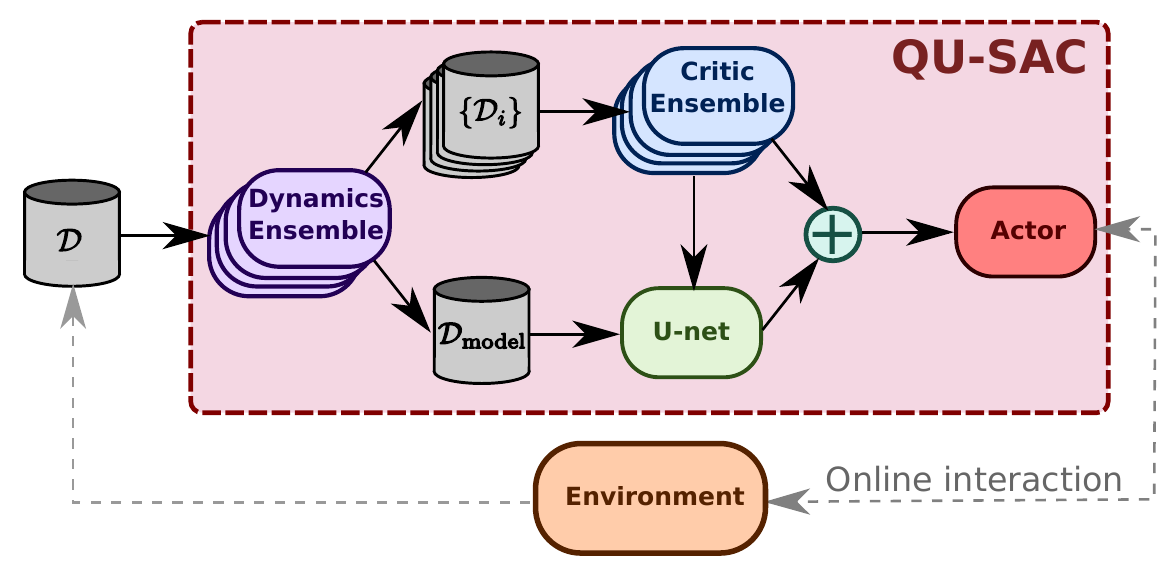}
  \caption{Architecture for $Q$-Uncertainty Soft Actor-Critic (QU-SAC). The dataset $\mathcal{D}$
  may be either static, as in offline RL, or be dynamically populated with online interactions. This
  dataset is used to train an ensemble of dynamics models which is then used for synthetic rollout
  generation. Each member of the ensemble populates its own buffer $\mathcal{D}_i$, which is used to
  train a corresponding ensemble of critics. Additionally, member-randomized rollouts are stored in
  $\mathcal{D}_{\text{model}}$ and used to train a $U$-net, which outputs an estimated epistemic
  variance of the value prediction. Lastly, the actor aims to optimize the risk-aware objective
  \cref{eq:policy_opt}, which combines the output of the critic ensemble and the $U$-net.}
  \label{fig:deep_rl_architecture}
\end{figure*}

\subsection{Continuous Problems}
We tackle problems in continuous domains using neural networks for function approximation. The
resulting architecture is named $Q$-Uncertainty Soft Actor-Critic (QU-SAC) which builds upon MBPO by
\citet{janner_when_2019} and is depicted in \cref{fig:deep_rl_architecture}.

~\\
\paragraph{Posterior dynamics.} In contrast to the tabular implementation, maintaining an explicit
distribution over MDPs from which we can sample is intractable. Instead, we approximate $\Gamma$
with an ensemble, which have been linked to approximate posterior inference
\citep{osband_randomized_2018}. More concretely, we model $\Gamma$ as a discrete uniform
distribution of $N$ probabilistic neural networks, denoted $p_\theta$, that output the mean and
covariance of a Gaussian distribution over next states and rewards \citep{chua_deep_2018}. In this
case, the output of \cref{line:sample_model} in \cref{algorithm:our_algorithm} is precisely the
ensemble of neural networks.

~\\
\paragraph{Critics.} The original MBPO trains $Q$-functions represented as neural networks via
TD-learning on data generated via \emph{model-randomized} $k$-step rollouts from initial states that
are sampled from $\mathcal{D}$. Each forward prediction of the rollout comes from a randomly
selected model of the ensemble and the transitions are stored in a single replay buffer
$\mathcal{D}_{\text{model}}$, which is then fed into a model-free optimizer like SAC.
\Cref{algorithm:our_algorithm} requires a few modifications from the MBPO methodology. To implement
\cref{line:q_function}, in addition to $\mathcal{D}_{\text{model}}$, we create $N$ new buffers
$\set{\mathcal{D}_i}_{i=1}^{N}$ filled with \emph{model-consistent} rollouts, where each $k$-step
rollout is generated under a single model of the ensemble, starting from initial states sampled from
$\mathcal{D}$. We train an ensemble of $N$ value functions $\set{Q_i}_{i=1}^{N}$, parameterized by
$\set{\psi_i}_{i=1}^{N}$, and minimize the residual Bellman error with entropy regularization 
\begin{equation}
  \label{eq:loss_q}
  \mathcal{L}(\psi_i) = \E_{(s,a,r,s') \sim \mathcal{D}_i} \bracket{\paren{y_i- Q_i(s, a; \psi_i)}^2}, 
\end{equation}
where $y_i = r + \gamma \paren{Q_i(s', a'; \bar{\psi}_i) - \alpha \log \pi_\phi(a' \mid s')}$
and $\bar{\psi}_i$ are the target network parameters updated via Polyak averaging for
stability during training \citep{mnih_playing_2013}. The mean $Q$-values, $\bar{Q}^\pi$, are
estimated as the average value of the $Q$-ensemble. 

~\\
\paragraph{Uncertainty rewards.} Our theory prescribes propagating the uncertainty rewards
\cref{eq:bellman_exact_reward} to obtain the \texttt{exact-ube} estimate. It is possible to
approximate these rewards, as in the tabular case, by considering the ensemble of critics as samples
from the value distribution. If we focus only on estimating the positive component of the
\texttt{exact-ube} estimate, i.e., the local uncertainty defined by \citet{zhou_deep_2020}, then a
sample-based approximation is given by
\begin{equation}
  \hat{w}(s, a) = \V_i\bracket{\set{\bar{Q}(s'_i, a'_i)}_{i=1}^N},
\end{equation}
where $s'_i \sim p_i(\cdot \mid s, a)$. While this approach is sensible from our theory
perspective and has lead to promising results in our previous work
\citep{luis_model-based_2023}, it has two main shortcomings in practice: (\textit{i}) it
can be computationally intensive to estimate the rewards and (\textit{ii}) the magnitude
of the rewards is typically low, even if the individual critics have largely different
estimated values. The latter point is illustrated in
\cref{fig:uncertainty_reward_problem}: the term $\hat{w}(s,a)$ captures the local
variance of the average value function, which would be small if the function is
relatively flat around $(s,a)$ or if the dynamics model ensemble yields similar forward
predictions starting from $(s,a)$. Empirically, we found that across many environments
the average magnitude of $\hat{w}(s,a)$ is indeed small (e.g., $\sim10^{-3}$), which
makes training a $U$-net challenging. We alleviate both shortcomings via a simple proxy
uncertainty reward:
\begin{equation}
  \label{eq:modified_rewards}
  \hat{w}_{\text{ub}}(s, a) = \V_i\bracket{\set{Q_i(s,a)}_{i=1}^N},
\end{equation}
which is the sample-based approximation of the value variance. We denote this estimate
\texttt{upper-bound} (thus, the subscript ``ub'' in \cref{eq:modified_rewards}), since in the limit
of infinite samples from the value distribution, solving a UBE with rewards $\hat{w}_{\text{ub}}(s,
a)$ results in an upper bound on the value variance at $(s,a)$.

\begin{figure*}[t]
	\centering
  \includegraphics[width=1.0\textwidth]{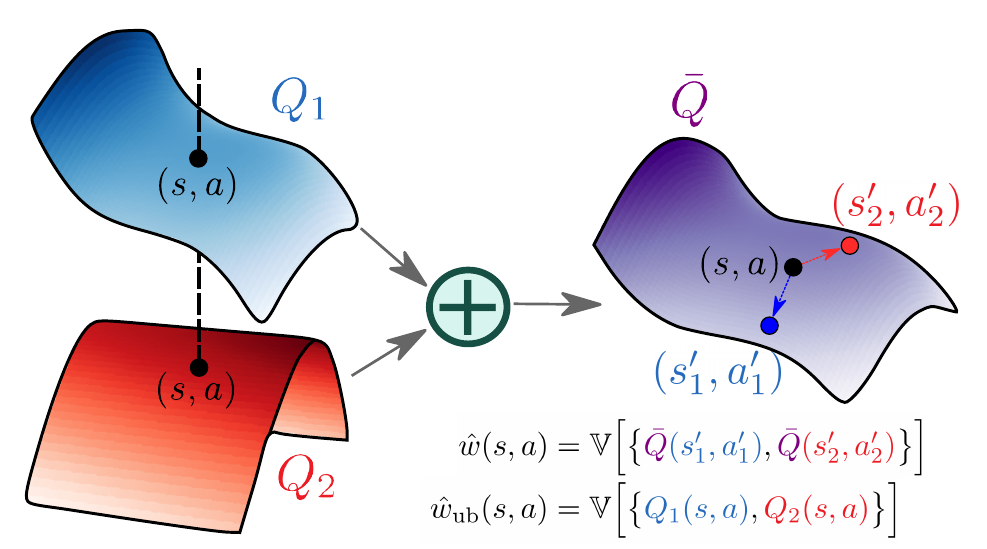}
  \caption{Illustrative example of uncertainty rewards. \textbf{(Left)} ensemble of two value
  functions $\set{Q_1, Q_2}$. \textbf{(Right)} corresponding mean value function $\bar{Q}$. The
  theory prescribes estimating the term in \cref{eq:pombu_rewards}, denoted $\hat{w}(s,a)$, which
  captures local variability of $\bar{Q}$ around $(s,a)$. Empirically, $\hat{w}(s,a)$ can be small
  despite large differences in individual members of the value ensemble, e.g., because $\bar{Q}$ is
  relatively flat around $(s,a)$. We propose the proxy uncertainty reward $\hat{w}_{\text{ub}}(s,
  a)$ which directly captures variability across the value ensemble and is less computationally
  expensive (no dynamics model forward pass).}
  \label{fig:uncertainty_reward_problem}
\end{figure*}

The proxy rewards $\hat{w}_{\text{ub}}(s, a)$ capture explicitly the value ensemble disagreement
rather than local variations of the average value, which empirically results in larger rewards being
propagated through the UBE. Moreover, the proxy reward calculation requires only one forward pass
through the critic ensemble, without need for forward predictions with the dynamics model as for
$\hat{w}(s,a)$.

~\\
\paragraph{Variance estimate.} Similar to critic training, we model the variance estimate
$\hat{U}^\pi$ with a neural network, denoted $U$-net, parameterized by $\varphi$ and trained to
minimize the UBE residual
\begin{equation}
  \label{eq:loss_u}
  \mathcal{L}(\varphi) = \E_{(s,a,r,s') \sim \mathcal{D_{\text{model}}}} \bracket{\paren{z - U(s, a; \varphi)}^2},
\end{equation}
with targets $z = \gamma^2\hat{w}_{\text{ub}}(s, a)  + \gamma ^2 U(s', a'; \bar{\varphi})$ and
target parameters $\bar{\varphi}$ updated like in regular critics. Since we interpret the output of
the network as predictive variances, we use a \emph{softplus} output layer to guarantee non-negative
values. Moreover, we apply a symlog transformation to the UBE targets $z$, as proposed by
\citet{hafner_mastering_2023}, which helps the $U$-net converge to the target values more easily.
Namely, the $U$-net is trained to predict the symlog transform of the target values $z$, defined as
$\text{symlog}(z) = \text{sign}(z)\log(\abs{z} + 1)$. To retrieve the $U$-values, we apply the
inverse transform $\text{symexp}(z) = \text{sign}(z)(\exp(\abs{z}) - 1)$ to the output of the
$U$-net.

~\\
\paragraph{Actor.} The stochastic policy is represented as a neural network with parameters $\phi$,
denoted by $\pi_\phi$. The policy's objective is derived from SAC, where in addition to entropy
regularization, we include the predicted standard deviation of values for uncertainty-aware
optimization.
\begin{equation}
  \label{eq:actor_loss}
    \mathcal{L}(\phi) = \E_{s \sim \mathcal{D_{\text{model}}}} \bracket{\E_{a \sim
    \pi_\phi}\bracket{\bar{Q}(s,a) + \lambda \sqrt{U(s,a)} - \alpha \log \pi_\phi(a \mid s)}}.
\end{equation}

~\\
\paragraph{Online vs offline optimization.} With QU-SAC we aim to use largely the same algorithm to
tackle both online and offline problems. Beyond differences in hyperparameters, the only algorithmic
change in QU-SAC is that for offline optimization we modify the data used to train the actor, critic
and $U$-net to also include data from the offline dataset (an even 50/50 split between offline and
model-generated data in our case), which is a standard practice in offline model-based RL
\citep{rigter_rambo-rl_2022,yu_mopo_2020,yu_combo_2021,jeong_conservative_2023}.

\section{Experiments}
\label{sec:experiments}
In this section, we empirically evaluate the performance of our risk-aware policy optimization
scheme \cref{eq:policy_opt} in various problems and compare against related baselines.

\subsection{Baselines}
In \cref{algorithm:our_algorithm}, we consider different implementations of the \texttt{qvariance}
method to estimate $\hat{U}(s,a)$: \texttt{ensemble-var} directly uses the sample-based
approximation $\hat{w}_{\text{ub}}(s,a)$ in \cref{eq:modified_rewards}; \texttt{pombu} uses the
solution to the UBE \cref{eq:ube_pombu}; \texttt{exact-ube} uses the solution to our proposed UBE
\cref{eq:bellman_exact}; and \texttt{upper-bound} refers to the solution of the UBE with the
modified rewards \cref{eq:modified_rewards}. We also compare against not using any form of
uncertainty quantification, which we refer to as \texttt{ensemble-mean}.

Additionally, in tabular problems we include PSRL by \citet{osband_more_2013} as a baseline since it
typically outperforms recent OFU-based methods
\citep{odonoghue_variational_2021,tiapkin_dirichlet_2022}. We also include MBPO
\citep{janner_when_2019} and MOPO \citep{yu_mopo_2020} as baselines for online and offline problems,
respectively.

\subsection{Gridworld Exploration Benchmark}
We evaluate the tabular implementation in grid-world environments where exploration is key to find
the optimal policy.

\begin{figure*}
	\centering
  \includegraphics[width=\textwidth]{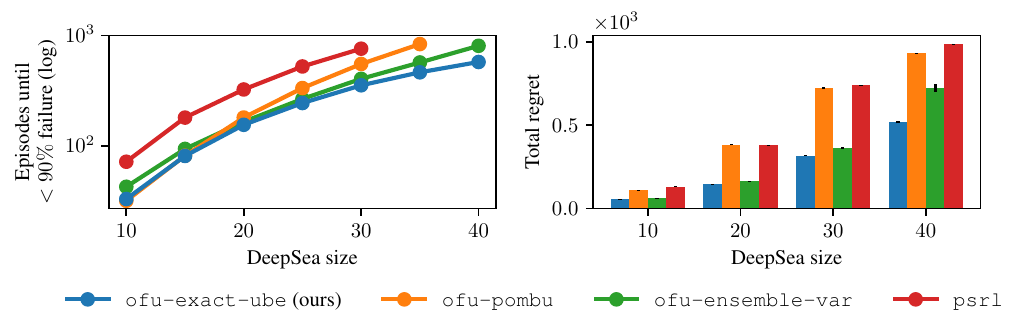}
  \caption{Performance in the \emph{DeepSea} benchmark. Lower values in plots indicate better
  performance. (Left) Learning time is measured as the first episode where the sparse reward has
  been found at least in 10\% of episodes so far. (Right) Total regret is approximately equal to the
  number of episodes where the sparse reward was not found. Results represent the average over 5
  random seeds, and vertical bars on total regret indicate the standard error. Our variance estimate
  achieves the lowest regret and best scaling with problem size.}
  \label{fig:deep_sea_results}
\end{figure*}

~\\
\paragraph{DeepSea.} First proposed by \citet{osband_deep_2019}, this environment tests
the agent's ability to explore over multiple time steps in the presence of a deterrent.
It consists of an $L \times L$ grid-world MDP, where the agent starts at the top-left
cell and must reach the lower-right cell. The agent decides to move left or right, while
always descending to the row below. We consider the deterministic version of the
problem, so the agent always transitions according to the chosen action. Going left
yields no reward, while going right incurs an action cost (negative reward) of $0.01 /
L$. The bottom-right cell yields a reward of 1, so that the optimal policy is to always
go right. As the size of the environment increases, the agent must perform sustained
exploration in order to reach the sparse reward. Implementation and hyperparameter
details are included in \cref{app:experimental_details}. 

The experiment consists on running each method for 1000 episodes and five random seeds, recording
the total regret and ``learning time'', defined as the first episode where the rewarding state has
been found at least in 10\% of the episodes so far \citep{odonoghue_variational_2021}. For this
experiment, we found that using $u_{\min} = -0.05$ improves the performance of our method: since the
underlying MDP is acyclic, propagating negative uncertainty rewards is consistent with our theory.

\begin{figure}[t]
	\centering
  \includegraphics{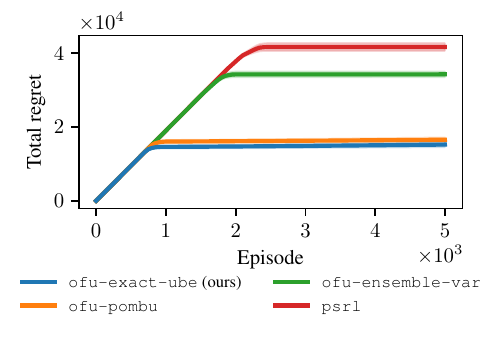}
  \caption{Total regret curve for the 7-room environment. Lower regret is better. Results are the average (solid lines) and
  standard error (shaded regions) over 10 random seeds. Our method achieves the lowest regret,
  significantly outperforming PSRL.}
  \label{fig:nroom}
\end{figure}

\cref{fig:deep_sea_results} (left) shows the evolution of learning time as $L$ increases. Our method
achieves the lowest learning time and best scaling with problem size. Notably, all the OFU-based
methods learn faster than PSRL, a strong argument in favour of using the variance of value functions
to guide exploration. \Cref{fig:deep_sea_results} (right) shows that our approach consistently
achieves the lowest total regret across all values of $L$. This empirical evidence indicates that
the solution to our UBE can be integrated into common exploration techniques like UCB to serve as an
effective uncertainty signal. Moreover, our method significantly improves peformance over
\texttt{pombu}, highlighting the relevance of our theory results.

~\\
\paragraph{7-room.} As implemented by \citet{domingues_rlberry_2021}, the 7-room environment
consists of seven connected rooms of size $5\times 5$. The agent starts in the center of the middle
room and an episode lasts 40 steps. The possible actions are up-down-left-right and the agent
transitions according to the selected action with probability $0.95$, otherwise it lands in a random
neighboring cell. The environment has zero reward everywhere except two small rewards at the start
position and in the left-most room, and one large reward in the right-most room. Unlike
\emph{DeepSea}, the underlying MDP for this environment contains cycles, so it evaluates our method
beyond the theoretical assumptions. In \cref{fig:nroom}, we show the regret curves over 5000
episodes. Our method achieves the lowest regret, which is remarkable considering recent empirical
evidence favoring PSRL over OFU-based methods in these type of environments
\citep{tiapkin_dirichlet_2022}. The large gap between \texttt{ensemble-var} and the UBE-based
methods is due to overall larger variance estimates from the former, which consequently requires
more episodes to reduce the value uncertainty.

\subsection{DeepMind Control Suite - Exploration Benchmark}
\label{subsec:online_experiments}
In this section, we evaluate the performance of QU-SAC for online exploration in environments with
continuous state-action spaces. Implementation details and hyperparameters are included in
\Cref{app:online_deep_rl}.

We test the exploration capabilities of QU-SAC on a subset of environments from the DeepMind Control
(DMC) suite \citep{tunyasuvunakool_dm_control_2020} with a sparse reward signal. Moreover, we modify
the environments' rewards to include a small negative term proportional to the squared norm of the
action vector, similar to \citet{curi_efficient_2020}. Such action costs are relevant for
energy-constrained systems where the agent must learn to maximize the primary objective while
minimizing the actuation effort. However, the added negative reward signal may inhibit exploration
and lead to premature convergence to sub-optimal policies. In this context, we want to compare the
exploration capabilities of QU-SAC with the different variance estimates.

In \cref{fig:dmc_benchmark} we plot the performance of all baselines in our exploration benchmark
after 500 episodes (or equivalently, 500K environment steps). In addition to individual learning
curves, we aggregate performance across all environments and report the median and inter-quartile
mean (IQM) \citep{agarwal_deep_2021}. The results highlight that QU-SAC with our proposed
\texttt{upper-bound} variance estimate offers the best overall performance. The pendulum swingup
environment is a prime example of a task where the proposed approach excels: greedily optimizing for
mean values, like MBPO and \texttt{ensemble-mean}, does not explore enough to observe the sparse
reward; \texttt{ensemble-var} improves performance upon the greedy approach, but does not work
consistently across random seeds unlike \texttt{upper-bound}. In this case, the stronger exploration
signal afforded by propagating uncertainty through the $U$-net is key to maintain exploration
despite low variability on the critic ensemble predictions.

\begin{figure*}[t]
	\centering
  \includegraphics[width=1.0\textwidth]{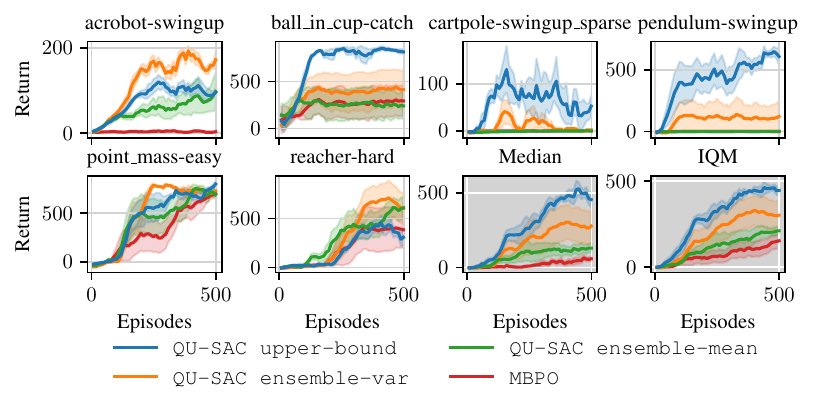}
  \caption{DeepMind Control Suite Benchmark smoothened learning curves over 500 episodes (500K
  environment steps). We report the mean (solid) and standard error (shaded region) over five random
  seeds. QU-SAC with the \texttt{upper-bound} variance estimate outperforms the baselines in 4/6
  environments and has the best overall performance.}
  \label{fig:dmc_benchmark}
\end{figure*}

\subsection{D4RL Offline Benchmark}
\label{subsec:offline_experiments}
In this section, we evaluate the performance of QU-SAC for offline RL in the Mujoco
\citep{todorov_mujoco_2012} datasets from the D4RL benchmark \citep{fu_d4rl_2020}. Implementation
details and hyperparameters are included in \Cref{app:offline_deep_rl}.

The core idea behind QU-SAC for offline optimization is to leverage the predicted value uncertainty
for conservative (pessimistic) policy optimization. This simply involves fixing $\lambda < 0$ to
downweight values depending on their predicted uncertainty. In addition to uncertainty-based
pessimism, prior work proposed SAC-$M$ \citep{an_uncertainty-based_2021} which uses an ensemble of
$M$ critics and imposes conservatism by taking the minimum of the ensemble prediction as the value
estimate. A key question we want to address with our experiments is whether pure uncertainty-based
pessimism is enough to avoid out-of-distribution over-estimation in offline RL. 

In order to provide an empirical answer, we augment QU-SAC with SAC-$M$ by training $M$ critics for
each of the $N$ dynamics models. The result is an ensemble of $NM$ critics, labelled as $Q_{ij}$ for
$i=\set{1, \dots, N}$, $j=\set{1, \dots, M}$. Each subset of $M$ critics is trained using clipped
Q-learning \citep{fujimoto_addressing_2018} as in SAC-$M$, where the $i$-th critic prediction is
simply defined as $Q_i(s,a) = \min_j Q_{ij}(s,a)$. The mean critic prediction is redefined as the
average over clipped $Q$-values, $\bar{Q}(s,a) = 1/N \sum_{i=1}^N \min_j Q_{ij}(s,a)$. If $M=1$ we
recover the original QU-SAC which only uses variance prediction as a mechanism for conservative
optimization. Note that MOPO fixes $M=2$, which means it combines uncertainty and clipped-based
conservatism by default; we re-implemented MOPO in order to allow for arbitrary $M$. In this
context, our key question becomes: can any of the methods perform well with $M=1$, i.e., only using
uncertainty-based pessimism?

\begin{figure*}[t]
	\centering
  \includegraphics[width=1.0\textwidth]{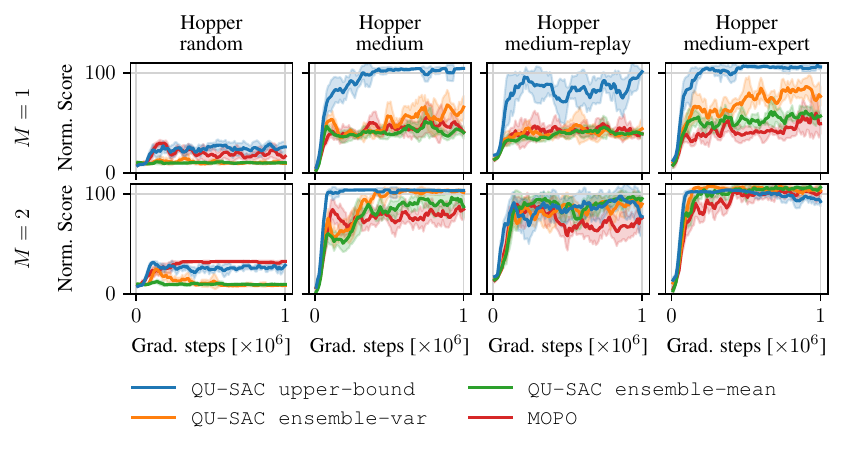}
  \caption{D4RL learning curves for Hopper datasets, smoothened by a moving average filter. We
  report the mean (solid) and standard deviation (shaded region) over five random seeds of the
  average normalized score over 10 evaluation episodes. We use $M=1$ for all baselines on the top
  row plots and $M=2$ on the bottom row. QU-SAC with the \texttt{upper-bound} variance estimate
  provides the most consistent performance across both values of $M$.}
  \label{fig:d4rl_hopper_critic_comparison}
\end{figure*}

\begin{table*}[t]
\addtolength{\tabcolsep}{-2pt}
\renewcommand\arraystretch{1.3}
\small
\centering
\caption{D4RL final normalized scores for model-based RL algorithms. The highest average scores are
highlighted in light blue. MOPO$\star$ corresponds to our own implementation of the algorithm by
\citet{yu_mopo_2020}, while QU-SAC utilizes the \texttt{upper-bound} variance estimate and $M=2$.
For MOPO$\star$ and QU-SAC, we report the mean and standard deviation over five random seeds. The
scores for the original MOPO results are as reported by \citet{bai_pessimistic_2022}. We take the
results for COMBO \citep{yu_combo_2021}, RAMBO \citep{rigter_rambo-rl_2022} and CBOP
\citep{jeong_conservative_2023} from their corresponding papers.}
\label{tab:d4rl_sota_comp}
\resizebox{1.0\textwidth}{!}{%
\vspace{1ex}
\begin{tabular}{
  c@{\hspace{3pt}}l@{\hspace{5.0pt}}r@{\hspace{-0pt}}lr@{\hspace{-0.0pt}}lr@{\hspace{-0.0pt}}lr@{\hspace{-0.0pt}}lr@{\hspace{-0.0pt}}lr@{\hspace{-0.0pt}}lr
  }

\toprule

\multicolumn{1}{l}{} & & \multicolumn{2}{c}{MOPO} & \multicolumn{2}{c}{MOPO$\star$} &
\multicolumn{2}{c}{COMBO} & \multicolumn{2}{c}{RAMBO} & \multicolumn{2}{c}{CBOP} &
\multicolumn{2}{c}{\textbf{QU-SAC}}  \\

\midrule
\multirow{3}{*}{\rotatebox[origin=c]{90}{Random}} 

& HalfCheetah & 35.9 & $\pm$2.9  & 25.9 & $\pm$1.4 & 38.8 & $\pm$3.7& \colorbox{mine}{40.0} &
$\pm$2.3 & 32.8 & $\pm$0.4 & 30.2 & $\pm$1.5 \\

& Hopper & 16.7 & $\pm$12.2  & \colorbox{mine}{32.6} & $\pm$0.2  & 17.9 & $\pm$1.4 & 21.6 & $\pm$8.0
& 31.4 & $\pm$0.0 & 31.5 & $\pm$0.2 \\

& Walker2D & 4.2 & $\pm$5.7  & 1.0 & $\pm$1.9  & 7.0 & $\pm$3.6 & 11.5  & $\pm$10.5 & 17.8 &
$\pm$0.4 & \colorbox{mine}{21.7} & $\pm$0.1 \\

\midrule
\multirow{3}{*}{\rotatebox[origin=c]{90}{Medium}} 

& HalfCheetah & 73.1 & $\pm$2.4  & 60.6 & $\pm$2.4 & 54.2 & $\pm$1.5 & \colorbox{mine}{77.6} &
$\pm$1.5 & 74.3 & $\pm$0.2 & 60.7 & $\pm$1.7 \\

& Hopper & 38.3 & $\pm$34.9  & 81.3 & $\pm$15.7  & 97.2 & $\pm$2.2  & 92.8 & $\pm$6.0 &
102.6 & $\pm$0.1 & \colorbox{mine}{103.5} & $\pm$0.2\\

& Walker2D & 41.2 & $\pm$30.8  & 85.3 & $\pm$1.3  & 81.9 & $\pm$2.8  & 85.0 & $\pm$15.0 &
\colorbox{mine}{95.5} & $\pm$0.4 & 86.5 & $\pm$0.7 \\

\midrule
\multirow{3}{*}{\rotatebox[origin=c]{90}{\shortstack{Medium\\Replay}}} 

& HalfCheetah & \colorbox{mine}{69.2} & $\pm$1.1  & 55.7 & $\pm$0.9 & 55.1 & $\pm$1.0 & 68.9 &
$\pm$2.3 & 66.4 & $\pm$0.3 & 58.9 & $\pm$1.3 \\

& Hopper & 32.7 & $\pm$9.4  & 69.0 & $\pm$27.0  & 89.5 & $\pm$2.8 & 96.6  & $\pm$7.0 &
\colorbox{mine}{104.3} & $\pm$0.4 & 86.2 & $\pm$20.9 \\

& Walker2D & 73.7 & $\pm$9.4  & 83.1 & $\pm$5.0 & 56.0 & $\pm$8.6 & 85.0 & $\pm$15.0 &
\colorbox{mine}{92.7} & $\pm$0.9 & 76.8 & $\pm$0.6 \\

\midrule
\multirow{3}{*}{\rotatebox[origin=c]{90}{\shortstack{Medium\\Expert}}} 

& HalfCheetah & 70.3 & $\pm$21.9  & 95.0 & $\pm$1.7 & 90.0 & $\pm$5.6 & 93.7 & $\pm$10.5 &
\colorbox{mine}{105.4} & $\pm$1.6 & 99.1 & $\pm$2.5 \\

& Hopper & 60.6 & $\pm$32.5  & 104.5 & $\pm$7.7 & \colorbox{mine}{111.1} & $\pm$2.9
& 83.3 & $\pm$9.1 & \colorbox{mine}{111.6} & $\pm$0.2 & 93.8 & $\pm$10.4 \\

& Walker2D & 77.4 & $\pm$27.9  & 107.7 & $\pm$0.8  & 103.3 & $\pm$5.6 & 68.3 & $\pm$20.6 &
\colorbox{mine}{117.2} & $\pm$0.5 & 93.7 & $\pm$25.6 \\

\midrule
\multicolumn{1}{l}{} & \textbf{Average} & \multicolumn{2}{c}{49.4} & \multicolumn{2}{c}{66.8} &
\multicolumn{2}{c}{66.8} & \multicolumn{2}{c}{68.7} &\multicolumn{2}{c}{\colorbox{mine}{79.3}}
&\multicolumn{2}{c}{70.2} \\


\multicolumn{1}{l}{} & \textbf{IQM} & \multicolumn{2}{c}{52.6} & \multicolumn{2}{c}{72.5} &
\multicolumn{2}{c}{71.1} & \multicolumn{2}{c}{78.0} &\multicolumn{2}{c}{\colorbox{mine}{89.3}}
&\multicolumn{2}{c}{77.1} \\
\bottomrule
\end{tabular}
}
\end{table*}

We conduct experiments in D4RL for three environments (Hopper, HalfCheetah and Walker2D) and four
tasks each (random, medium, medium-replay and medium-expert) for a total of 12 datasets. For each
dataset, we pre-train an ensemble of dynamics models and then run offline optimization for 1M
gradient steps. In \cref{fig:d4rl_hopper_critic_comparison} we present the results for the Hopper
datasets using $M=\set{1, 2}$. In the pure uncertainty-based pessimism setting ($M=1$), QU-SAC with
the \texttt{upper-bound} variance estimate obtains the best performance by a wide margin.
Qualitatively, the effect of supplementing \texttt{upper-bound} with clipped Q-learning ($M=2$) is
more stable performance rather than a significant score improvement, unlike most other baselines
that do improve substantially. These results suggest that proper uncertainty quantification might be
sufficient for offline learning, without relying on additional mechanisms to combat
out-of-distribution biases such as clipped Q-learning. Learning curves and scores for all datasets
are provided in \cref{app:offline_deep_rl}.

In \cref{tab:d4rl_sota_comp}, we compare the final scores of QU-SAC (using \texttt{upper-bound} and
$M=2$) against recent model-based offline RL methods. While scores are typically lower than the
state-of-the-art method CBOP \citep{jeong_conservative_2023}, our general-purpose method outperforms
MOPO and is on-par with more recent and stronger model-based baselines like COMBO and
RAMBO\footnote{When $M=1$, QU-SAC using \texttt{upper-bound} obtains an average score of $70.4$ (IQM
of 74.4) (see \cref{tab:d4rl_appendix} in the supplementary material), which is also comparable to
the reported performance of COMBO and RAMBO.}.

\section{Conclusions}
In this paper, we derived an uncertainty Bellman equation whose fixed-point solution converges to
the variance of values given a posterior distribution over MDPs. Our theory brings new understanding
by characterizing the gap in previous UBE formulations that upper-bound the variance of values. We
showed that this gap is the consequence of an over-approximation of the uncertainty rewards being
propagated through the Bellman recursion, which ignore the inherent \emph{aleatoric} uncertainty
from acting in an MDP. Instead, our theory recovers exclusively the \emph{epistemic} uncertainty due
to limited environment data, thus serving as an effective exploration signal. The tighter variance
estimate showed improved regret in typical tabular exploration problems.

Beyond tabular RL, we identified challenges on applying the UBE theory for uncertainty
quantification and proposed a simple proxy uncertainty reward to overcome them. Based on this
approximation, we introduced the $Q$-Uncertainty Soft Actor-Critic (QU-SAC) algorithm that can be
used for both online and offline RL with minimal changes. For online RL, the proposed proxy
uncertainty reward was instrumental for exploration in sparse reward problems. In offline RL, we
demonstrate QU-SAC has solid performance without additional regularization mechanisms unlike other
uncertainty quantification methods.

\begin{appendices}
\maketitle
\section{Theory Proofs}
\label{app:proofs}
\subsection{Proof of \texorpdfstring{\cref{thm:ube}}{Theorem \ref{thm:ube}}}
\label{app:proofshm_ube}
In this section, we provide the formal proof of \cref{thm:ube}. We begin by showing an expression
for the posterior variance of the value function without assumptions on the MDP. We define the joint
distribution $p^\pi(a, s' \mid s) = \pi(a\mid s)p(s' \mid s,a)$ for a generic transition function
$p$. To ease notation, since $\pi$ is fixed, we will simply denote the joint distribution as $p(a,s'
\mid s)$. 

\begin{lemma}
  \label{lemma:variance_decomp_no_assumptions}
  For any $s \in \mathcal{S}$ and any policy $\pi$, it holds that
  \begin{equation}
    \label{eq:variance_decomposition}
    \V_{p \sim \Phi} \bracket{V^{\pi, p}(s)} = \gamma^2\E_{p \sim \Phi} \bracket{
      \paren{
        \sum_{a,s'}p(a, s' \mid s) V^{\pi, p}(s')
      }^2
    } - \gamma^2
    \paren{
      \E_{p \sim \Phi} \bracket{
        \sum_{a,s'}p(a,s' \mid s) V^{\pi, p}(s')
      }
    }^2.
  \end{equation}
\end{lemma}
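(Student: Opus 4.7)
The plan is to start from the Bellman expectation equation for $V^{\pi,p}$ and isolate the random part that depends on $p$. Writing
\[
V^{\pi, p}(s) = \sum_a \pi(a \mid s)\, r(s,a) + \gamma \sum_{a,s'} p(a, s' \mid s)\, V^{\pi,p}(s'),
\]
I note that the first summand is a function of $s$ alone, with no dependence on the random transition $p$ (recall that the reward $r$ is assumed known in the problem statement). So it is a deterministic constant when viewed as a random variable under $\Phi$.

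Next, I apply the variance operator $\V_{p \sim \Phi}[\cdot]$ to both sides. Because variance is invariant under additive constants and scales quadratically under multiplicative ones, the reward term is annihilated and the $\gamma$ pulls out as $\gamma^2$:
\[
\V_{p \sim \Phi}\bracket{V^{\pi,p}(s)} = \gamma^2 \, \V_{p \sim \Phi}\bracket{\sum_{a,s'} p(a,s' \mid s)\, V^{\pi,p}(s')}.
\]
Finally, I expand the variance via the standard identity $\V[X] = \E[X^2] - (\E[X])^2$, which directly yields the claimed expression. The whole argument is about three lines of manipulation; there is no real obstacle, since the lemma is essentially a restatement of the Bellman equation together with the definition of variance, and it importantly does not invoke \cref{assumption:transitions} or \cref{assumption:acyclic} (those assumptions only become necessary later when simplifying the squared sum inside the expectation). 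The only subtlety to flag is that the assumption of a known reward function is what lets the reward term drop out cleanly; in the unknown-reward extension of \cref{app:unknown_rewards}, an additional reward-variance term would have to be retained.
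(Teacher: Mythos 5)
Your proof is correct and follows exactly the same route as the paper: apply the Bellman expectation equation, drop the deterministic reward term under the variance operator (pulling out $\gamma^2$), and expand with $\V[Y] = \E[Y^2] - (\E[Y])^2$. Your observation that \cref{assumption:transitions,assumption:acyclic} are not needed here also matches the paper, which states the lemma without those hypotheses.
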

\begin{proof}
  Using the Bellman expectation equation
  \begin{equation}
    \label{eq:bellman_expectation}
    V^{\pi,p}(s) = \sum_{a} \pi(a \mid s)r(s,a) + \gamma \sum_{a, s'}p(a,s' \mid s)V^{\pi, p}(s'),
  \end{equation}
  we have
  \begin{align}
    \V_{p \sim \Phi} \bracket{V^{\pi, p}(s)} &= \V_{p \sim \Phi} \bracket{\sum_a \pi(a \mid s)r(s,a) + \gamma\sum_{a, s'} p(a,s' \mid s)V^{\pi, p}(s')} \\
    &= \V_{p \sim \Phi} \bracket{\gamma\sum_{a,s'}p(a, s' \mid s)V^{\pi, p}(s')},  \label{eq:var_no_assumption}
  \end{align}
  where \cref{eq:var_no_assumption} holds since $r(s,a)$ is deterministic. Using the identity $\V[Y]
  = \E[Y^2] - (\E[Y])^2$ on \cref{eq:var_no_assumption} concludes the proof. 
\end{proof}

The next result is the direct consequence of our set of assumptions.
\begin{lemma}
  \label{lemma:independence_from_assumptions}
  Under \cref{assumption:transitions,assumption:acyclic}, for any $s \in \mathcal{S}$, any policy
  $\pi$, $\Cov[p(s' \mid s,a), V^{\pi,p}(s')] = 0$.
\end{lemma}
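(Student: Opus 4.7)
The plan is to show the stronger statement that $p(\cdot \mid s,a)$ and $V^{\pi,p}(s')$ are actually independent random variables under $\Phi$, which immediately gives zero covariance. The intuition is simple: $V^{\pi,p}(s')$ is a functional of transitions out of states that can be visited starting from $s'$, and under the acyclic assumption, $s$ is never one of those states, so the two quantities depend on disjoint pieces of the parameter vector, which are independent by \cref{assumption:transitions}.

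Concretely, I would proceed as follows. First, let $\mathcal{R}(s')$ denote the set of states reachable from $s'$ (under any realization of $p$ and any action sequence consistent with $\pi$). By unrolling the Bellman expectation equation \cref{eq:bellman_expectation} for $V^{\pi,p}(s')$, the value can be expressed as a (possibly infinite) deterministic function $F$ of the collection $\{p(\cdot \mid \tilde s, \tilde a) : \tilde s \in \mathcal{R}(s'),\ \tilde a \in \mathcal{A}\}$, since the reward function and the policy are non-random and the only random quantities entering the recursion are transition rows at states that can actually be visited from $s'$. Second, I invoke \cref{assumption:acyclic}: because $s \to s'$ under policy $\pi$ with positive probability under at least one realization of $p$, acyclicity prevents any trajectory from returning to $s$ after leaving it, so $s \notin \mathcal{R}(s')$. (In the absorbing/terminal-state framing used after \cref{assumption:acyclic}, an identical argument applies after augmenting the state with a time index.)

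Third, I apply \cref{assumption:transitions}: the posterior over the random vectors $\{p(\cdot \mid \tilde s, \tilde a)\}_{(\tilde s,\tilde a)}$ factorizes across state-action pairs. Since $s \notin \mathcal{R}(s')$, the random vector $p(\cdot \mid s,a)$ is independent of the entire family $\{p(\cdot \mid \tilde s, \tilde a) : \tilde s \in \mathcal{R}(s')\}$, and hence independent of any measurable function thereof. In particular, $p(\cdot \mid s,a)$ is independent of $F\bigl(\{p(\cdot \mid \tilde s, \tilde a) : \tilde s \in \mathcal{R}(s')\}\bigr) = V^{\pi,p}(s')$. Independence implies $\Cov[p(s'\mid s,a), V^{\pi,p}(s')] = 0$, which is the claim.

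The only delicate step is making the reachability argument rigorous, since "reachable" must be defined with respect to the random transition function $p$ itself. The clean way is to take $\mathcal{R}(s')$ to be the union of supports over all $p$ in the support of $\Phi$, which is still a subset of $\mathcal{S}\setminus\{s\}$ by acyclicity; the function $F$ can then be defined on the full product space of transition rows indexed by $\mathcal{R}(s')$, with rows outside the actual support contributing trivially. This is the main (minor) technicality; everything else is a direct application of the two assumptions.
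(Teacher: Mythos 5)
Your proposal is correct and follows essentially the same route as the paper's proof: both establish full independence (not merely zero covariance) by observing that $V^{\pi,p}(s')$ is a measurable function only of transition rows at states downstream of $s'$, which by \cref{assumption:acyclic} excludes $(s,a)$, and then invoking the factorization of \cref{assumption:transitions}. The paper packages the downstream-dependence argument via the trajectory probability $\Prob(\tau_{1:H}\mid p)$ rather than your reachable-set functional $F$, but this is only a difference in bookkeeping, and your explicit remark about handling reachability via the time-augmented state matches the paper's own framing of the acyclicity assumption.
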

\begin{proof}
  Let $\tau_{0:H}$ be a random trajectory of length $H < \abs{\mathcal{S}}$ steps with the random
  transition dynamics $p$. Under \cref{assumption:acyclic},  $\tau_{0:H}$ is a sequence of $H$
  random, but \emph{unique} states $\set{s_0, s_1, \dots, s_{H-1}}$, i.e., we have $s_i \neq s_j$
  for all $i \neq j$. Moreover, under \cref{assumption:transitions}, the conditioned trajectory
  probability $\Prob(\tau_{0:H} \mid p)$, which is itself a random variable through conditioning on
  $p$, is a product of independent random variables defined by
  \begin{align}
    \Prob(\tau_{0:H} \mid p) &= \prod_{h=0}^{H-1}\pi(a_h \mid a_h)p(s_{h+1} \mid s_h, a_h) \\
    &= p(s_1 \mid s_0, a_0) \pi(a_0 \mid a_0)\prod_{h=1}^{H-1}\pi(a_h \mid s_h)p(s_{h+1} \mid s_h,
    a_h). \\
    &= p(s_1 \mid s_0, a_0)\pi(a_0 \mid s_0) \Prob(\tau_{1:H} \mid p).
  \end{align}
  Note that each transition probability in $\Prob(\tau_{0:H} \mid p)$ is distinct by
  \cref{assumption:acyclic} and there is an implicit assumption that the policy $\pi$ is independent
  of $p$. Then, for arbitrary $s_0=s$, $a_0 = a$ and $s_1 = s'$, we have that $p(s' \mid s, a)$ is
  independent of $\Prob(T_{1:H}\mid p)$. Since $V^{\pi, p}(s_1 \mid s_1 = s')$ is a function of
  $\Prob(T_{1:H}\mid p)$, then it is also independent of $p(s' \mid s, a)$. Finally, since
  independence implies zero correlation, the lemma holds.
\end{proof}

Using the previous result yields the following lemma.
\begin{lemma}
  \label{lemma:uncorrelated_property}
  Under \cref{assumption:transitions,assumption:acyclic}, it holds that
  \begin{equation}
    \label{eq:uncorrelated_property}
    \sum_{a, s'} \E_{p \sim \Phi} \bracket{
      p(a, s' \mid s) V^{\pi, p}(s')
    } = 
    \sum_{a, s'} \bar{p}(a, s' \mid s) \E_{p \sim \Phi} \bracket{V^{\pi, p}(s')}.
  \end{equation}
\end{lemma}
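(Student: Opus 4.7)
The plan is to exploit the zero-covariance result from \cref{lemma:independence_from_assumptions} to swap an expectation of a product for a product of expectations, and then recognize the resulting factor $\E_{p\sim\Phi}[p(s'\mid s,a)]$ as the posterior mean transition $\bar{p}(s'\mid s,a)$. Since the policy $\pi$ is fixed and independent of $p$, its action probability can be pulled outside the expectation over $\Phi$ from the start, which reduces the statement to a claim about the scalar product $p(s'\mid s,a)\,V^{\pi,p}(s')$.

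First I would expand $p(a,s'\mid s)=\pi(a\mid s)p(s'\mid s,a)$ inside the expectation and use linearity to bring $\pi(a\mid s)$ out. The interior of the sum then becomes $\E_{p\sim\Phi}[p(s'\mid s,a)V^{\pi,p}(s')]$. Next, I would invoke the covariance identity
\begin{equation*}
\E_{p\sim\Phi}[p(s'\mid s,a)V^{\pi,p}(s')] = \E_{p\sim\Phi}[p(s'\mid s,a)]\,\E_{p\sim\Phi}[V^{\pi,p}(s')] + \Cov_{p\sim\Phi}[p(s'\mid s,a),V^{\pi,p}(s')],
\end{equation*}
and apply \cref{lemma:independence_from_assumptions} to eliminate the covariance term. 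The remaining factor $\E_{p\sim\Phi}[p(s'\mid s,a)]$ is exactly $\bar{p}(s'\mid s,a)$ by the definition in \cref{eq:mean_model}. Re-combining with $\pi(a\mid s)$ gives $\bar{p}(a,s'\mid s)$, and summing over $(a,s')$ yields the claim.

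The main technical subtlety is justifying that \cref{lemma:independence_from_assumptions} suffices: we need zero covariance between the scalar $p(s'\mid s,a)$ and the scalar $V^{\pi,p}(s')$ under $\Phi$, which is already established there using \cref{assumption:transitions,assumption:acyclic}; no additional argument about the dependence structure on other state-action pairs is required, because the identity is pointwise in $(s,a,s')$ before summation. Nothing else in the argument is delicate, so the lemma follows immediately once the covariance is dropped.
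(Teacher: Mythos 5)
Your proposal is correct and follows essentially the same route as the paper's proof: both factor out $\pi(a\mid s)$, apply the identity $\E[XY]=\Cov[X,Y]+\E[X]\E[Y]$ with the zero-covariance result of \cref{lemma:independence_from_assumptions}, and identify $\E_{p\sim\Phi}[p(s'\mid s,a)]$ with $\bar{p}(s'\mid s,a)$ via \cref{eq:mean_model}. Your write-up merely makes explicit the pointwise-in-$(a,s')$ application that the paper leaves implicit.
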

\begin{proof}
  For any pair of random variables $X$ and $Y$ on the same probability space, by definition of
  covariance it holds that $\E[XY] = \Cov[X,Y] + \E[X]\E[Y]$. Using this identity with
  \cref{lemma:independence_from_assumptions} and the definition of posterior mean transition
  \cref{eq:mean_model} yields the result.
\end{proof}

Now we are ready to prove the main theorem.
\ube*

\begin{proof}
  Starting from the result in \cref{lemma:variance_decomp_no_assumptions}, we consider each term on
  the r.h.s of \cref{eq:variance_decomposition} separately. For the first term, notice that within
  the expectation we have a squared expectation over the transition probability $p(s' \mid s,a)$,
  thus using the identity $(\E[Y])^2 = \E[Y^2] - \V[Y]$ results in
  \begin{align}
    \E_{p \sim \Phi} \bracket{
      \paren{
        \sum_{a,s'}p(a,s' \mid s) V^{\pi, p}(s')
      }^2
    } &= 
    \E_{p \sim \Phi} \bracket{
      \sum_{a,s'}p(a,s' \mid s)\paren{V^{\pi,p}(s')}^2 -
      \V_{a,s' \sim \pi, p} \bracket{V^{\pi,p}(s')}
    }.  \intertext{Applying linearity of expectation to bring it inside the sum
    and an application of \cref{lemma:uncorrelated_property} (note that the lemma applies for squared values as well) gives}
    &= 
    \label{eq:thm1_firsterm}
    \sum_{a,s'}\bar{p}(a,s' \mid s)\E_{p \sim \Phi}\bracket{\paren{V^{\pi,p}(s')}^2} - \E_{p \sim \Phi}\bracket{\V_{a,s' \sim \pi,p} \bracket{V^{\pi,p}(s')}
    }.
  \end{align}
  For the second term of the r.h.s of \cref{eq:variance_decomposition} we apply again
  \cref{lemma:uncorrelated_property} and under definition of variance
  \begin{align}
    \paren{
      \E_{p \sim \Phi} \bracket{
        \sum_{a,s'}p(a,s' \mid s) V^{\pi, p}(s')
      }
    }^2 \label{eq:prevhm1_seconderm}
    &= \paren{
      \sum_{a,s'}\bar{p}(a,s' \mid s)\E_{p \sim \Phi}\bracket{V^{\pi,p}(s')}
    }^2 \\
    &= \label{eq:thm1_seconderm} \sum_{a,s'}\bar{p}(a,s' \mid s)\paren{
      \E_{p \sim \Phi}\bracket{V^{\pi,p}(s')}
    }^2 - \V_{a,s' \sim\pi, \bar{p}}\bracket{\E_{p \sim \Phi}\bracket{V^{\pi,p}(s')}}.
  \end{align}
  Finally, since
  \begin{equation}
    \E_{p \sim \Phi}\bracket{\paren{V^{\pi,p}(s')}^2} -
    \paren{
      \E_{p \sim \Phi}\bracket{V^{\pi,p}(s')}
    }^2  = \V_{p \sim \Phi}\bracket{V^{\pi,p}(s')}
  \end{equation}
  for any $s' \in \mathcal{S}$, we can plug \cref{eq:thm1_firsterm,eq:thm1_seconderm} into
  \cref{eq:variance_decomposition}, which proves the theorem.
\end{proof}

\subsection{Proof of \texorpdfstring{\cref{thm:connection_uncertainties}}{Theorem
\ref{thm:connection_uncertainties}}}
\label{app:proofshm_connections}
In this section, we provide the supporting theory and the proof of
\cref{thm:connection_uncertainties}. First, we will use the identity $\V[\E[Y|X]] = \E[(\E[Y|X])^2]
- (\E[E[Y|X]])^2$ to prove $u(s) = w(s) - g(s)$ holds, with ${Y = \sum_{a,s'}p(a,s' \mid
s)V^{\pi,p}(s')}$. For the conditioning variable $X$, we define a transition function with fixed
input state $s$ as a mapping $p_s : \mathcal{A} \to \Delta(S)$ representing a distribution $p_s(s'
\mid a) = p(s' \mid s,a)$. Then $X = \mathbf{P}_s := \set{p_s(s' \mid a)}_{s' \in \mathcal{S}, a \in
\mathcal{A}}$. The transition function $p_s$ is drawn from a distribution $\Phi_{s}$ obtained by
marginalizing $\Phi$ on all transitions not starting from $s$. 
\begin{lemma}
  \label{lemma:pombu_local_uncertainty}
  Under \cref{assumption:transitions,assumption:acyclic}, it holds that 
  \begin{equation}
    \V_{p_s \sim \Phi_{s,t}} \bracket{\E_{p \sim \Phi} \bracket{\sum_{a,s'} p(a,s' \mid s)V^{\pi,p}(s') \given \mathbf{P}_s}} = \V_{p \sim \Phi} \bracket{\sum_{a,s'} p(a,s' \mid s)\bar{V}^\pi(s')}.
  \end{equation}
\end{lemma}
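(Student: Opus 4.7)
The strategy is to first collapse the inner conditional expectation on the left-hand side into a deterministic function of $\mathbf{P}_s$, and then argue that the outer variance over $\Phi_s$ coincides with the variance over $\Phi$ on the right-hand side because the resulting expression depends on $p$ only through $\mathbf{P}_s$.

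First, I would condition on $\mathbf{P}_s = \set{p(s' \mid s, a)}_{a, s'}$. Since conditioning on $\mathbf{P}_s$ fixes $p(a, s' \mid s) = \pi(a\mid s) p(s' \mid s, a)$, this factor can be pulled out of the inner expectation, giving
\[
\E_{p \sim \Phi}\bracket{\sum_{a,s'} p(a,s'\mid s) V^{\pi,p}(s') \given \mathbf{P}_s}
= \sum_{a,s'} p(a,s' \mid s)\, \E_{p \sim \Phi}\bracket{V^{\pi,p}(s') \given \mathbf{P}_s}.
\]
So the crux is to compute $\E_{p \sim \Phi}\bracket{V^{\pi,p}(s') \given \mathbf{P}_s}$.

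Second, I would show that $V^{\pi,p}(s')$ is independent of $\mathbf{P}_s$. By \cref{assumption:acyclic}, no trajectory starting at $s'$ can ever revisit $s$, so $V^{\pi,p}(s')$ is measurable with respect to $\set{p(\cdot \mid \tilde s, \tilde a)}_{\tilde s \neq s}$. By \cref{assumption:transitions}, this collection is independent of $\mathbf{P}_s = \set{p(\cdot \mid s, a)}_a$ under $\Phi$. This is essentially the same argument as in \cref{lemma:independence_from_assumptions}, with the single transition $p(s' \mid s, a)$ replaced by the whole block $\mathbf{P}_s$; one can either invoke that proof verbatim or just read off the independence from the product structure that Assumption 1 imposes on $\Phi$. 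Consequently,
\[
\E_{p \sim \Phi}\bracket{V^{\pi,p}(s') \given \mathbf{P}_s} = \E_{p \sim \Phi}\bracket{V^{\pi,p}(s')} = \bar{V}^\pi(s').
\]

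Third, substituting back, the inner conditional expectation equals $\sum_{a,s'} p(a,s'\mid s) \bar{V}^\pi(s')$, which is a function of $p$ only through $\mathbf{P}_s$. Taking variance over $p_s \sim \Phi_s$ therefore produces the same value as taking variance over the full posterior $p \sim \Phi$, since the relevant marginal agrees by construction of $\Phi_s$. This yields the right-hand side and closes the proof. The main obstacle is the conditional independence step, which is where both \cref{assumption:transitions,assumption:acyclic} are used simultaneously; once that is in place, everything else is bookkeeping.
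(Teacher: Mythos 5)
Your proposal is correct and follows essentially the same route as the paper's proof: condition on $\mathbf{P}_s$ to pull the transition probabilities out of the inner expectation, invoke the independence of $V^{\pi,p}(s')$ from $\mathbf{P}_s$ (the paper cites \cref{lemma:independence_from_assumptions} for exactly this) to replace the conditional expectation by $\bar{V}^\pi(s')$, and then identify the variance over the marginal $\Phi_s$ with the variance over the joint $\Phi$. The only cosmetic difference is that you re-derive the conditional independence from the assumptions inline rather than citing the lemma, which is fine.
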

\begin{proof}
  Treating the inner expectation,
  \begin{align}
    \E_{p \sim \Phi} \bracket{\sum_{a,s'} p(a,s' \mid s)V^{\pi,p}(s')\mid \textbf{P}_s} &= \sum_{a}\pi(a \mid s) \sum_{s'}\E_{p \sim \Phi}\bracket{p(s' \mid s,a)V^{\pi, p}(s') \given \mathbf{P}_s}. 
    \intertext{Due to the conditioning, $p(s' \mid s,a)$ is deterministic within the expectation}
    &= \sum_{a,s'} p(a,s' \mid s) \E_{p \sim \Phi}\bracket{V^{\pi,p}(s') \given \mathbf{P}_s}.
    \intertext{By \cref{lemma:independence_from_assumptions}, $V^{\pi,p}(s')$ is independent of $\mathbf{P}_s$, so we can drop the conditioning}
    &= \sum_{a,s'} p(a,s' \mid s)\bar{V}^\pi(s').
  \end{align}
  Lastly, since drawing samples from a marginal distribution is equivalent to drawing samples from
  the joint, i.e., $\V_x[f(x)] = \V_{(x,y)}[f(x)]$, then:
  \begin{equation}
    \V_{p_s \sim \Phi_{s,t}}\bracket{\sum_{a,s'} p(a,s' \mid s)\bar{V}^\pi(s')} = \V_{p \sim \Phi}\bracket{\sum_{a,s'} p(a,s' \mid s)\bar{V}^\pi(s')},
  \end{equation}
  completing the proof.
\end{proof}

The next lemma establishes the result for the expression $\E[(\E[Y|X])^2]$.
\begin{lemma}
  \label{lemma:rhs_pombu_uncertainty_first}
  Under \cref{assumption:transitions,assumption:acyclic}, it holds that 
  \begin{align}
    \E_{p_s \sim \Phi_{s,t}} \bracket{\paren{\E_{p \sim \Phi} \bracket{\sum_{a,s'}p(a,s' \mid s)V^{\pi,p}(s')\given \mathbf{P}_s}}^2} &= \sum_{a,s'}\bar{p}(a,s' \mid s)\paren{\bar{V}^\pi(s')} - \E_{p \sim \Phi}\bracket{\V_{a,s' \sim \pi,p} \bracket{\bar{V}^\pi(s')}}.
  \end{align}
  \begin{proof}
    The inner expectation is equal to the one in \cref{lemma:pombu_local_uncertainty}, so we have
    that 
    \begin{align}
      \paren{\E_{p \sim \Phi} \bracket{\sum_{a,s'}p(a,s' \mid s)V^{\pi,p}(s')\given \mathbf{P}_s}}^2 &= \paren{\sum_{a,s'}p(a,s' \mid s) \bar{V}^\pi(s')}^2 \\
      &= \sum_{a,s'}p(a,s' \mid s)\paren{\bar{V}^\pi(s')}^2 - \V_{a,s' \sim \pi,p}\bracket{\bar{V}^\pi(s')}. \label{eq:term_no_exp}
    \end{align}
    Finally, applying expectation on both sides of \cref{eq:term_no_exp} yields the result.
  \end{proof}
\end{lemma}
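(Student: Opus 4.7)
The plan is to reduce the statement to a direct algebraic identity by reusing the inner-expectation simplification of \cref{lemma:pombu_local_uncertainty}, then applying the scalar identity $(\E[Y])^2 = \E[Y^2] - \V[Y]$, and finally averaging over the outer draw $p_s \sim \Phi_s$.

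First I would handle the inner conditional expectation exactly as in \cref{lemma:pombu_local_uncertainty}: conditioning on $\mathbf{P}_s$ fixes every $p(s'\mid s,a)$, and \cref{lemma:independence_from_assumptions} allows us to drop the conditioning on $V^{\pi,p}(s')$, so
$$\E_{p \sim \Phi}\bracket{\sum_{a,s'} p(a,s'\mid s)\, V^{\pi,p}(s') \given \mathbf{P}_s} = \sum_{a,s'} p(a,s'\mid s)\,\bar{V}^\pi(s').$$
I would then square this expression and apply the identity $(\E_X[f])^2 = \E_X[f^2] - \V_X[f]$ to the inner randomness $(a,s') \sim \pi, p$ at a fixed $p$, with $f(a,s') = \bar{V}^\pi(s')$, which yields
$$\paren{\sum_{a,s'} p(a,s'\mid s)\,\bar{V}^\pi(s')}^2 = \sum_{a,s'} p(a,s'\mid s)\,\bar{V}^\pi(s')^2 - \V_{a,s' \sim \pi, p}\bracket{\bar{V}^\pi(s')}.$$

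To finish, I would take $\E_{p_s \sim \Phi_s}$ on both sides. Because $\bar{V}^\pi(s')$ does not depend on $p$, and because marginal sampling $p_s \sim \Phi_s$ coincides with sampling $p \sim \Phi$ and projecting onto $\mathbf{P}_s$, linearity of expectation turns the first term into $\sum_{a,s'} \bar{p}(a,s'\mid s)\,\bar{V}^\pi(s')^2$ by the definition of $\bar{p}$, while the second term becomes $\E_{p \sim \Phi}\bracket{\V_{a,s' \sim \pi, p}\bracket{\bar{V}^\pi(s')}}$, which matches the claimed right-hand side. The only real subtlety is bookkeeping: one must cleanly separate the outer draw over $p$ (equivalently $p_s$), the inner randomness over $(a,s')$ at fixed $p$, and the deterministic dependence of $\bar{V}^\pi$ on $s'$; beyond this, no additional machinery is required.
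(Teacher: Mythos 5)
Your proposal is correct and follows essentially the same route as the paper's proof: simplify the inner conditional expectation via \cref{lemma:pombu_local_uncertainty}, expand the square with the identity $(\E[Y])^2 = \E[Y^2] - \V[Y]$ applied to the randomness $(a,s') \sim \pi,p$, and then take the outer expectation. Your final step even spells out the bookkeeping (linearity plus the definition of $\bar{p}$) that the paper leaves implicit, and you correctly write the first term with $\bar{V}^\pi(s')^2$ squared, which the paper's statement omits as an apparent typo.
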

Similarly, the next lemma establishes the result for the expression $(\E[\E[Y|X]])^2$.
\begin{lemma}
  \label{lemma:rhs_pombu_uncertainty_second}
  Under \cref{assumption:transitions,assumption:acyclic}, it holds that 
  \begin{align}
    \paren{\E_{p_s \sim \Phi_{s,t}} \bracket{\E_{p \sim \Phi} \bracket{\sum_{a,s'}p(a,s' \mid s)V^{\pi,p}(s')\given \mathbf{P}_s}}^2} &= \sum_{a,s'}\bar{p}(a,s' \mid s)\paren{\bar{V}^\pi(s')} - \V_{a,s' \sim \pi,\bar{p}} \bracket{\bar{V}^\pi(s')}.
  \end{align}
  \begin{proof}
    By the tower property of expectations, $(\E[\E[Y|X]])^2 = (\E[Y])^2$. Then, the result follows
    directly from \cref{eq:prevhm1_seconderm,eq:thm1_seconderm}.
  \end{proof}
\end{lemma}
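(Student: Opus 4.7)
The plan is to follow the hint that the left-hand side collapses to an unconditional squared expectation via the tower property, after which the identity becomes an immediate consequence of manipulations already performed in the proof of \cref{thm:ube}. Concretely, setting $Y = \sum_{a,s'} p(a,s' \mid s) V^{\pi,p}(s')$ and $X = \mathbf{P}_s$, the outer object on the left is $\paren{\E_{X}\bracket{\E[Y \mid X]}}^2$. The tower property $\E_X[\E[Y \mid X]] = \E[Y]$ holds quite generally (here the joint distribution over $(p, \mathbf{P}_s)$ is well-defined because $\Phi_{s,t}$ is the marginal of $\Phi$ on $\mathbf{P}_s$), so the left-hand side simplifies to $\paren{\E_{p \sim \Phi}\bracket{\sum_{a,s'} p(a,s' \mid s) V^{\pi,p}(s')}}^2$.

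From that point I would reproduce verbatim the chain of equalities \cref{eq:prevhm1_seconderm}--\cref{eq:thm1_seconderm} from the proof of \cref{thm:ube}. First, apply \cref{lemma:uncorrelated_property} under \cref{assumption:transitions,assumption:acyclic} to pull the value expectation past the transition sum, turning the expression into $\paren{\sum_{a,s'} \bar{p}(a,s' \mid s)\, \bar{V}^\pi(s')}^2$. Second, apply the standard identity $(\E[Z])^2 = \E[Z^2] - \V[Z]$ to the distribution $(a,s') \sim (\pi, \bar{p})$ acting on the random variable $Z = \bar{V}^\pi(s')$. This yields exactly $\sum_{a,s'} \bar{p}(a,s' \mid s)\paren{\bar{V}^\pi(s')}^2 - \V_{a,s' \sim \pi,\bar{p}} \bracket{\bar{V}^\pi(s')}$, matching the right-hand side of the lemma (modulo what is almost certainly a typo in the statement, where the first term is written without the square on $\bar{V}^\pi$).

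The only substantive step is the reduction of the iterated expectation, and even that is routine given the measurability setup: both expectations are on the same probability space built from $\Phi$, with $\Phi_{s,t}$ defined as the marginal of $\Phi$ over $\mathbf{P}_s$, so tower applies without subtlety. The subsequent algebra is purely deterministic expansion. I therefore do not expect a genuine obstacle; the main care point is bookkeeping which expectation is over $\Phi$ versus $\Phi_{s,t}$ and ensuring that \cref{lemma:uncorrelated_property} is invoked in exactly the same form as in \cref{eq:prevhm1_seconderm}. The whole argument should fit in two or three lines once the tower step is stated.
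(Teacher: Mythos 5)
Your proposal is correct and follows exactly the paper's own route: collapse the iterated expectation via the tower property and then invoke the algebra of \cref{eq:prevhm1_seconderm,eq:thm1_seconderm}, i.e., \cref{lemma:uncorrelated_property} followed by the identity $(\E[Z])^2 = \E[Z^2] - \V[Z]$ under $(a,s') \sim (\pi,\bar{p})$. You are also right that the missing square on $\bar{V}^\pi(s')$ in the first term of the stated right-hand side is a typo.
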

The second part of \cref{thm:connection_uncertainties} is a corollary of the next lemma.
\begin{lemma}
  \label{lemma:inflated_uncertainty}
  Under \cref{assumption:transitions,assumption:acyclic}, it holds that
  \begin{equation}
    \label{eq:non_negative_quantity}
    \E_{p \sim \Phi}\bracket{\V_{a, s' \sim \pi, p}\bracket{V^{\pi,p}(s')} - \V_{a,s' \sim \pi, p}\bracket{\bar{V}^\pi(s')}}
  \end{equation}
  is non-negative.
\end{lemma}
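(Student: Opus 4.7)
The plan is to introduce the mean-zero fluctuation $\delta^p(s') := V^{\pi,p}(s') - \bar V^\pi(s')$, which, by definition of $\bar V^\pi$, satisfies $\E_{p \sim \Phi}[\delta^p(s')] = 0$ for every $s' \in \mathcal{S}$. Substituting $V^{\pi,p} = \bar V^\pi + \delta^p$ and applying the variance-of-a-sum identity at fixed $p$ will give
\[
\V_{a,s' \sim \pi,p}\bracket{V^{\pi,p}(s')} - \V_{a,s' \sim \pi,p}\bracket{\bar V^\pi(s')} = \V_{a,s' \sim \pi,p}\bracket{\delta^p(s')} + 2\,\Cov_{a,s' \sim \pi,p}\bracket{\bar V^\pi(s'), \delta^p(s')}.
\]
After taking $\E_{p \sim \Phi}$ on both sides, the first term on the right is non-negative pointwise in $p$, so the entire lemma will reduce to showing that the expected covariance vanishes.

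I would then expand the covariance into two pieces. For $\E_p[\E_{a,s' \sim \pi,p}[\bar V^\pi(s')\delta^p(s')]]$, I would reuse the argument underlying \cref{lemma:uncorrelated_property}: since $\bar V^\pi(s')$ is deterministic and, by \cref{lemma:independence_from_assumptions}, $\delta^p(s')$ is independent of $p(s' \mid s,a)$, the transition factorises out to $\bar p(a, s' \mid s)$, leaving $\sum_{a,s'} \bar p(a,s' \mid s)\bar V^\pi(s')\E_p[\delta^p(s')] = 0$. For the harder piece $\E_p[\E_{a,s' \sim \pi,p}[\bar V^\pi(s')] \cdot \E_{a,s' \sim \pi,p}[\delta^p(s')]]$, I would write it as a double sum over $(a,s',a'',s'')$ with integrand $p(s'\mid s,a)\,p(s''\mid s,a'')\,\bar V^\pi(s')\,\delta^p(s'')$; by \cref{assumption:acyclic} a trajectory starting at $s''$ never revisits $s$, so $\delta^p(s'')$ depends only on transition parameters $\{p(\cdot\mid \tilde s,\tilde a):\tilde s\neq s\}$ and is therefore independent of $\mathbf P_s = \{p(\cdot \mid s,a)\}_{a \in \mathcal{A}}$. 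The expectation will then factorise, and the factor $\E_p[\delta^p(s'')] = 0$ will kill the contribution.

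The hard part will be this second piece, where the joint independence of \emph{two} distinct transition parameters out of $s$ with a downstream value fluctuation is needed—an argument that simultaneously relies on \cref{assumption:transitions} (for the joint behaviour of $p(\cdot\mid s,a)$ and $p(\cdot\mid s,a'')$) and on \cref{assumption:acyclic} (to decouple $\delta^p(s'')$ from $\mathbf P_s$). Everything else is essentially bookkeeping. Once the expected covariance is shown to vanish, the lemma follows because the quantity of interest reduces to $\E_p[\V_{a,s' \sim \pi,p}[\delta^p(s')]]\geq 0$, i.e., a non-negative expected conditional variance.
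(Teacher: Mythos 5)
Your proposal is correct and lands on exactly the identity the paper proves, namely that the quantity in \cref{eq:non_negative_quantity} equals $\E_{p\sim\Phi}\bracket{\V_{a,s'\sim\pi,p}\bracket{V^{\pi,p}(s')-\bar{V}^\pi(s')}}\geq 0$, but you reach it by a different decomposition. The paper computes the expected conditional variance $\E[\V[Y\mid X]]$ of $Y=\sum_{a,s'}p(a,s'\mid s)V^{\pi,p}(s')$ given $X=\mathbf{P}_s$ in two ways --- once via $\E[\E[(Y-\E[Y\mid X])^2\mid X]]$ and once via $\E[\E[Y^2\mid X]-(\E[Y\mid X])^2]$ --- and equates the results, letting the common term $\sum_{a,s'}\bar{p}(a,s'\mid s)\V_{p}[V^{\pi,p}(s')]$ cancel. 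You instead write $V^{\pi,p}=\bar{V}^\pi+\delta^p$ with $\E_p[\delta^p(s')]=0$, expand the variance of the sum at fixed $p$, and show the expected cross-covariance vanishes. Both arguments consume the same independence inputs: your first covariance piece is essentially \cref{lemma:uncorrelated_property}, and your second piece needs $\delta^p(s'')$ to be independent of the whole block $\mathbf{P}_s$ (not merely uncorrelated with a single component $p(s'\mid s,a)$), which is the same strengthening of \cref{lemma:independence_from_assumptions} that the paper itself invokes inside \cref{lemma:pombu_local_uncertainty}; it holds because under \cref{assumption:acyclic} no trajectory launched from a successor $s''$ can revisit $s$, so $V^{\pi,p}(s'')$ depends only on parameters at states other than $s$, which are independent of $\mathbf{P}_s$ by \cref{assumption:transitions}. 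One small point in your favour: you correctly avoid having to evaluate $\E_p[p(s'\mid s,a)\,p(s''\mid s,a'')]$, which does not factorize (transition probabilities out of the same state are dependent, e.g.\ components of a Dirichlet); independence of $\delta^p(s'')$ from $\mathbf{P}_s$ together with $\E_p[\delta^p(s'')]=0$ already kills the term. What your route buys is transparency --- the entire gap is exposed as an expected covariance that dies by mean-zero-ness of the fluctuation --- whereas the paper's route reuses the variance identities already set up for \cref{thm:ube}.
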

\begin{proof}
  We will prove the lemma by showing \cref{eq:non_negative_quantity} is equal to $\E_{p \sim
  \Phi}\bracket{\V_{a,s' \sim \pi,p}\bracket{V^{\pi,p}(s') - \bar{V}^\pi(s')}}$, which is a
  non-negative quantiy by definition of variance. The idea is to derive two expressions for
  $\E[\V[Y|X]]$ and compare them. First, we will use the identity $\E[\V[Y|X]] = \E[\E[(Y - \E[Y|X])^2
  | X]]$. The outer expectation is w.r.t the marginal distribution $\Phi_{s}$ while the inner
  expectations are w.r.t $\Phi$. For the inner expectation we have
  \begin{align}
    &
    \E_{p \sim \Phi} \bracket{\paren{
      \sum_{a,s'}p(a,s' \mid s)V^{\pi, p}(s') - \E_{p \sim \Phi}\bracket{\sum_{a,s'}p(a,s' \mid s)V^{\pi,p}(s') \given \mathbf{P}_s}
    }^2\given \mathbf{P}_s} \\
    &= \E_{p \sim \Phi} \bracket{\paren{
      \sum_{a,s'}p(a,s' \mid s) \paren{
        V^{\pi, p}(s') - \E_{p\sim \Phi}\bracket{V^{\pi,p}\given \mathbf{P}_s}
      }}^2 \given \mathbf{P}_s} \\
    &= \E_{p \sim \Phi} \bracket{\paren{
      \sum_{a,s'}p(a,s' \mid s) \paren{
        V^{\pi, p}(s') - \bar{V}^\pi(s')
      }}^2 \given \mathbf{P}_s} \\
    &= \E_{p \sim \Phi} \bracket{
      \sum_{a,s'}p(a,s' \mid s)\paren{V^{\pi,p}(s') - \bar{V}^\pi(s')}^2 - \V_{a,s' \sim \pi,p}\bracket{V^{\pi,p}(s') - \bar{V}^\pi(s')} \given \mathbf{P}_s} \\
    &= \sum_{a,s'}p(a,s' \mid s) \V_{p \sim \Phi}\bracket{V^{\pi, p}(s')} - \E_{p \sim \Phi}\bracket{\V_{a,s' \sim \pi,p}\bracket{V^{\pi,p}(s') - \bar{V}^\pi(s')} \given \mathbf{P}_s}.
  \end{align}
  Applying the outer expectation to the last equation, along with
  \cref{lemma:independence_from_assumptions} and the tower property of expectations yields:
  \begin{equation}
    \label{eq:first_equivalenceotal_variance}
    \E[\V[Y|X]] = \sum_{a,s'}\bar{p}(a,s' \mid s)\V_{p \sim \Phi}\bracket{V^{\pi, p}(s')} - \E_{p \sim \Phi}\bracket{\V_{a,s' \sim \pi,p}\bracket{V^{\pi,p}(s') - \bar{V}^\pi(s')}}.
  \end{equation}
  Now we repeat the derivation but using $\E[\V[Y|X]] = \E[\E[Y^2|X] - (\E[Y|X])^2]$. For the inner
  expectation of the first term we have:
  \begin{align}
    &\E_{p \sim \Phi} \bracket{
      \paren{
        \sum_{a,s'}p(a,s' \mid s)V^{\pi,p}(s')
      }^2 \given \mathbf{P}_s
    }\\
    &= 
    \E_{p \sim \Phi} \bracket{
      \sum_{a,s'}p(a,s' \mid s)\paren{V^{\pi,p}(s')}^2 - \V_{a,s' \sim \pi,p}\bracket{V^{\pi,p}(s')} \given \mathbf{P}_s
    }.
  \end{align}
  Applying the outer expectation:
  \begin{equation}
    \label{eq:second_equivalenceotal_variance}
    \E[\E[Y^2|X]] = \sum_{a,s'}\bar{p}(a,s' \mid s)\E_{p \sim
    \Phi}\bracket{\paren{V^{\pi,p}(s')}^2} - \E_{p \sim
    \Phi}\bracket{\V_{a,s' \sim \pi,p}\bracket{V^{\pi,p}(s')}}.
  \end{equation}
  Lastly, for the inner expectation of $\E[(\E[Y|X])^2]$:
  \begin{align}
    \paren{\E_{p \sim \Phi} \bracket{
        \sum_{a,s'}p(a,s' \mid s)V^{\pi,p}(s')
      \given \mathbf{P}_s}
    }^2 &= 
    \paren{
      \sum_{a,s'}p(a,s' \mid s) \bar{V}^\pi(s')
    }^2 \\
    &=
    \sum_{a,s'}p(a,s' \mid s)\paren{\bar{V}^\pi(s')}^2 - \V_{a,s' \sim \pi,p}\bracket{\bar{V}^\pi(s')}.
  \end{align}
  Applying the outer expectation:
  \begin{equation}
    \label{eq:third_equivalenceotal_variance}
    \E[(\E[Y|X])^2] = \sum_{a,s'}\bar{p}(a,s' \mid s)\paren{\bar{V}^\pi(s')}^2 - \E_{p \sim \Phi}\bracket{\V_{a,s' \sim \pi,p}\bracket{\bar{V}^{\pi}(s')}}.
  \end{equation}
  Finally, by properties of variance, \cref{eq:first_equivalenceotal_variance} =
  \cref{eq:second_equivalenceotal_variance} - \cref{eq:third_equivalenceotal_variance} which
  gives the desired result.
\end{proof}
\connections*
\begin{proof}
  By definition of $u(s)$ in \cref{eq:bellman_exact_reward}, proving the claim is equivalent to
  showing
  \begin{equation}
    \label{eq:thm2_proxy_equivalence}
    \V_{a,s' \sim \pi, \bar{p}}\bracket{\bar{V}^\pi(s')} = w(s) + \E_{p \sim \Phi}\bracket{\V_{a,s' \sim \pi, p}\bracket{\bar{V}^\pi(s')}},
  \end{equation}
  which holds by combining
  \cref{lemma:pombu_local_uncertainty,lemma:rhs_pombu_uncertainty_first,lemma:rhs_pombu_uncertainty_second}.
  Lastly, $u(s) \leq w(s)$ holds by \cref{lemma:inflated_uncertainty}.
\end{proof}

\section{Theory Extensions}
\subsection{Unknown Reward Function}
\label{app:unknown_rewards}
We can easily extend the derivations on \cref{app:proofshm_ube} to include the additional
uncertainty coming from an \emph{unknown} reward function. Similarly, we assume the reward function
is a random variable $r$ drawn from a prior distribution $\Psi(r)$, and whose belief will be updated
via Bayes rule. In this new setting, we now consider the variance of the values under the
distribution of MDPs, represented by the random variable $\mathcal{M}$. We need the following
additional assumptions to extend our theory.
\begin{assumption}[Independent rewards]
    \label{assumption:indep_rewards}
    $r(x,a)$ and $r(y,a)$ are independent random variables if $x\neq y$.
\end{assumption}
\begin{assumption}[Independent transitions and rewards]
    \label{assumption:indepransit_rewards}
    The random variables $p(\cdot \mid s,a)$ and $r(s,a)$ are independent for any $(s,a)$.
\end{assumption}
With \cref{assumption:indep_rewards} we have that the value function of next states is independent
of the transition function and reward function at the current state.
\cref{assumption:indepransit_rewards} means that sampling $\mathcal{M} \sim \Gamma$
is equivalent as independently sampling $p \sim \Phi$ and $r \sim \Psi$.

\begin{restatable}{theorem}{ube_unknown_reward}
  \label{thm:ube_unknown_rewards}
  Under \crefrange{assumption:transitions}{assumption:indepransit_rewards}, for any
  $s \in \mathcal{S}$ and policy $\pi$, the posterior variance of the value function, $U^\pi =
  \V_{\mathcal{M} \sim \Gamma} \bracket{V^{\pi,\mathcal{M}}}$ obeys the uncertainty Bellman equation
  \begin{equation}
    U^\pi(s) = \V_{r \sim \Psi} \bracket{
      \sum_a \pi(a \mid s) r(s,a)
    } + 
    \gamma ^ 2u(s) + \gamma^2\sum_{a, s'}\pi(a \mid s)\bar{p}(s' \mid s,a) U^\pi(s'),
  \end{equation}
  where $u(s)$ is defined in \cref{eq:bellman_exact_reward}.
\end{restatable}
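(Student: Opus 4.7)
The plan is to mirror the proof of \cref{thm:ube} (in \cref{app:proofshm_ube}), tracking where the assumption of a deterministic reward was used and replacing it with the additional independence structure afforded by \cref{assumption:indep_rewards,assumption:indepransit_rewards}. Starting from the Bellman expectation equation, write
\begin{equation}
    V^{\pi,\mathcal{M}}(s) = R(s) + \gamma\, T^{\mathcal{M}}(s), \qquad R(s) = \sum_a \pi(a\mid s)\, r(s,a), \qquad T^{\mathcal{M}}(s) = \sum_{a,s'} p(a,s'\mid s)\, V^{\pi,\mathcal{M}}(s'),
\end{equation}
so that $\V_{\mathcal{M}\sim\Gamma}[V^{\pi,\mathcal{M}}(s)] = \V[R(s)] + \gamma^2 \V[T^{\mathcal{M}}(s)] + 2\gamma\,\Cov[R(s), T^{\mathcal{M}}(s)]$. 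The first paragraph of work is to identify $\V[R(s)]$ with the new leading term of the claim and to show that the covariance vanishes.

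For the covariance, I would argue as follows: $R(s)$ is a function of $\{r(s,a)\}_a$ only. Meanwhile $T^{\mathcal{M}}(s)$ is a function of $\{p(s'\mid s,a)\}_{a,s'}$ together with the next-state values $\{V^{\pi,\mathcal{M}}(s')\}_{s'}$. By \cref{assumption:indepransit_rewards}, the transitions at $(s,a)$ are independent of $r(s,a)$. By \cref{assumption:acyclic}, the trajectory starting at any $s'$ never revisits $s$, so $V^{\pi,\mathcal{M}}(s')$ is a measurable function of $\{p(\cdot\mid x,a), r(x,a) : x\neq s\}$ only; combined with \cref{assumption:transitions,assumption:indep_rewards} this makes $V^{\pi,\mathcal{M}}(s')$ independent of $\{r(s,a)\}_a$. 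Hence $R(s)$ is independent of $T^{\mathcal{M}}(s)$, the covariance is zero, and we also obtain the generalization of \cref{lemma:independence_from_assumptions} that $p(s'\mid s,a)$ is independent of $V^{\pi,\mathcal{M}}(s')$ even when rewards are random.

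With the covariance gone, it remains to show
\begin{equation}
    \gamma^2 \V_{\mathcal{M}\sim\Gamma}[T^{\mathcal{M}}(s)] = \gamma^2 u(s) + \gamma^2 \sum_{a,s'} \pi(a\mid s)\, \bar{p}(s'\mid s,a)\, U^\pi(s'),
\end{equation}
which is exactly the content of \cref{thm:ube} applied to the distribution over $\mathcal{M}$ rather than just over $p$. I would therefore rerun the chain \cref{lemma:variance_decomp_no_assumptions}--\cref{lemma:uncorrelated_property} verbatim, with two substitutions: expectations/variances over $p\sim\Phi$ become expectations over $\mathcal{M}\sim\Gamma$, and the key uncorrelatedness step uses the extended independence result above. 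Because $\bar{V}^\pi(s') = \E_{\mathcal{M}\sim\Gamma}[V^{\pi,\mathcal{M}}(s')]$ now also averages out reward randomness, the definition of $u(s)$ in \cref{eq:bellman_exact_reward} is reinterpreted accordingly but its algebraic role is unchanged.

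The main obstacle I anticipate is the careful bookkeeping for the extension of \cref{lemma:independence_from_assumptions}: one must verify that every random quantity appearing inside $V^{\pi,\mathcal{M}}(s')$ is measurable with respect to the $\sigma$-algebra generated by transitions and rewards at states other than $s$, which requires combining the acyclicity assumption with the two new independence assumptions in a single argument. Once this is nailed down, the rest of the proof is a mechanical reapplication of the identities used for \cref{thm:ube}, yielding the additional $\V_{r\sim\Psi}[\sum_a \pi(a\mid s) r(s,a)]$ term precisely because the reward is no longer deterministic and so no longer vanishes when the variance is taken.
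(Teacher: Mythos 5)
Your proposal is correct and follows essentially the same route as the paper: decompose $V^{\pi,\mathcal{M}}(s)$ into the immediate-reward term and the discounted next-state term, use \cref{assumption:indep_rewards,assumption:indepransit_rewards} (together with acyclicity) to kill the cross-covariance so the variance splits into $\V_{r\sim\Psi}[\sum_a\pi(a\mid s)r(s,a)]$ plus the transition term, and then rerun the \cref{thm:ube} machinery on the latter with $\Gamma$ in place of $\Phi$. The paper's own proof is just a terser version of this; your extra care about the measurability bookkeeping for the extension of \cref{lemma:independence_from_assumptions} is the part the paper leaves implicit.
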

\begin{proof}
  By \cref{assumption:indep_rewards,assumption:indepransit_rewards} and following the derivation
  of \cref{lemma:variance_decomp_no_assumptions} we have
  \begin{align}
    \V_{\mathcal{M} \sim \Gamma} \bracket{V^{\pi, \mathcal{M}}(s)} &= \V_{\mathcal{M} \sim \Gamma} \bracket{\sum_a \pi(a \mid s)r(s,a) + \gamma\sum_{a, s'} p(a,s' \mid s)V^{\pi, \mathcal{M}}(s')} \\
    &= \V_{r \sim \Psi} \bracket{
      \sum_a \pi(a \mid s) r(s,a)
    } + \V_{\mathcal{M} \sim \Gamma} \bracket{\gamma\sum_{a,s'}p(a, s' \mid s)V^{\pi, \mathcal{M}}(s')}.
  \end{align}
  Then following the same derivations as \cref{app:proofshm_ube} completes the proof.
\end{proof}

\subsection{Extension to \texorpdfstring{$Q$}{Q}-values}
\label{app:extension_q_values}
Our theoretical results naturally extend to action-value functions. The following result is
analogous to \cref{thm:ube}.
\begin{restatable}{theorem}{ube_q}
  Under \cref{assumption:transitions,assumption:acyclic}, for any $(s, a) \in \mathcal{S} \times
  \mathcal{A}$ and policy $\pi$, the posterior variance of the $Q$-function, $U^\pi = \V_{p \sim
  \Phi} \bracket{Q^{\pi,p}}$ obeys the uncertainty Bellman equation
  \begin{equation}
    U^\pi(s,a) = 
    \gamma ^ 2u(s,a) + \gamma^2\sum_{a', s'}\pi(a' \mid s')\bar{p}(s' \mid s,a) U^\pi(s', a'),
  \end{equation}
  where $u(s,a)$ is the local uncertainty defined as
  \begin{equation}
    u(s,a) = \V_{a', s' \sim \pi, \bar{p}} \bracket{\bar{Q}^\pi(s', a')} -
    \E_{p \sim \Phi} \bracket{\V_{a', s' \sim \pi, p} \bracket{Q^{\pi, p}(s',a')}}
  \end{equation}
\end{restatable}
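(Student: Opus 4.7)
\begin{proof_no_qed}
The plan is to mirror the proof of \cref{thm:ube} from \cref{app:proofshm_ube}, with the Bellman expectation equation for $Q$-values taking the place of the one for $V$-values. Concretely, I would start from
\begin{equation}
  Q^{\pi,p}(s,a) = r(s,a) + \gamma \sum_{s'} p(s' \mid s,a) \sum_{a'} \pi(a' \mid s') Q^{\pi,p}(s',a'),
\end{equation}
and note that since $r(s,a)$ is deterministic at the fixed current pair $(s,a)$, applying $\V_{p \sim \Phi}[\cdot]$ drops the reward term, leaving
\begin{equation}
  U^\pi(s,a) = \gamma^2 \V_{p \sim \Phi}\bracket{\sum_{s',a'} p(s' \mid s,a)\pi(a' \mid s') Q^{\pi,p}(s',a')}.
\end{equation}
This is the $Q$-analog of the starting point in \cref{lemma:variance_decomp_no_assumptions}, except that the inner joint distribution is now $p(s' \mid s,a)\pi(a' \mid s')$ (action at the current step is fixed; action at the next step is drawn from $\pi$), so I would introduce the notation $p(s',a' \mid s,a) := p(s' \mid s,a)\pi(a' \mid s')$ to keep the bookkeeping close to the $V$-case.

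Next I would establish the $Q$-analog of \cref{lemma:independence_from_assumptions,lemma:uncorrelated_property}: under \cref{assumption:transitions,assumption:acyclic}, $Q^{\pi,p}(s',a')$ is independent of $p(\cdot \mid s,a)$, since $Q^{\pi,p}(s',a')$ is a function only of transitions emanating from strictly later states along any realized trajectory, and acyclicity ensures $(s,a)$ is never revisited. This independence lets me swap $\E[p(\cdot \mid s,a)\,Q^{\pi,p}(s',a')]$ for $\bar p(\cdot \mid s,a)\,\bar Q^\pi(s',a')$ in suitable sums, which is precisely the technical device used in \cref{app:proofshm_ube}.

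With the swap in hand, I would apply $\V[Y] = \E[Y^2] - (\E[Y])^2$ to the bracketed expression above, expand $\E[Y^2]$ using the inner identity $(\E_{s',a'}[X])^2 = \E_{s',a'}[X^2] - \V_{s',a'}[X]$ applied to $X = Q^{\pi,p}(s',a')$, and likewise on $(\E[Y])^2$ with $X = \bar Q^\pi(s',a')$. Collecting the $\E[\,(Q^{\pi,p}(s',a'))^2\,] - (\E[Q^{\pi,p}(s',a')])^2 = \V_{p \sim \Phi}[Q^{\pi,p}(s',a')] = U^\pi(s',a')$ terms yields the discounted propagation term $\gamma^2 \sum_{s',a'} \bar p(s' \mid s,a)\pi(a' \mid s') U^\pi(s',a')$, and the remaining variance terms combine into exactly $\gamma^2 u(s,a)$ as defined in the statement.

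The main obstacle is the careful bookkeeping of which random variables are being averaged under which distribution: the outer expectation is over $p \sim \Phi$, the inner sums couple $s' \sim p(\cdot \mid s,a)$ and $a' \sim \pi(\cdot \mid s')$, and the independence argument must be applied at the right moment (after conditioning on the row $\mathbf{P}_{s,a}$ of transitions out of $(s,a)$, analogous to conditioning on $\mathbf{P}_s$ in \cref{lemma:pombu_local_uncertainty}). The $Q$-value independence lemma itself requires re-running the trajectory-factorization argument of \cref{lemma:independence_from_assumptions} starting from $(s,a,s',a')$ rather than $(s,a,s')$; this is a straightforward adaptation but is the one place where the proof is not purely a notational rewrite of the $V$-case.
\end{proof_no_qed}
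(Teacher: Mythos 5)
Your proposal is correct and follows essentially the same route as the paper, which proves this theorem by observing that the derivation of \cref{thm:ube} in \cref{app:proofshm_ube} carries over verbatim once the Bellman equation for $Q$-values replaces the one for $V$-values; you have simply spelled out the steps (deterministic reward drops under the variance, the joint next-step distribution becomes $p(s'\mid s,a)\pi(a'\mid s')$, and the independence lemma is re-run starting from the pair $(s,a)$). Your identification of the $Q$-analog of \cref{lemma:independence_from_assumptions} as the only place requiring a genuine (if routine) adaptation is exactly right.
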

\begin{proof}
  Follows the same derivation as \cref{app:proofshm_ube}
\end{proof}
Similarly, we can connect to the upper-bound found by \citet{zhou_deep_2020} with the following
theorem.
\begin{restatable}{theorem}{connections_q_function}
  Under \cref{assumption:transitions,assumption:acyclic}, for any $(s, a) \in \mathcal{S} \times
  \mathcal{A}$ and policy $\pi$, it holds that $u(s,a) = w(s,a) - g(s,a)$, where $w(s,a) =
  \V_{p\sim \Phi} \bracket{\sum_{a', s'}\pi(a' \mid s') p(s' \mid s,a) \bar{Q}^\pi(s',a')}$ and
  $g(s,a) = \E_{p \sim \Phi}\bracket{\V_{a',s' \sim \pi, p} \bracket{Q^{\pi,p}(s',a')}-
  \V_{a',s' \sim \pi, p} \bracket{\bar{Q}^\pi(s',a')}}$. Furthermore, we have that the gap $g(s, a)
  \geq 0$ is non-negative, thus $u(s,a) \leq w(s,a)$.
\end{restatable}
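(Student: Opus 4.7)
The plan is to carry the proof of \cref{thm:connection_uncertainties} over to the action-value setting essentially verbatim, with $Q^{\pi,p}(s',a')$ replacing $V^{\pi,p}(s')$ and the conditioning object $\mathbf{P}_s$ replaced by $\mathbf{P}_{s,a} = \set{p(s' \mid s,a)}_{s' \in \mathcal{S}}$. The $Q$-value version of \cref{thm:ube} stated immediately above gives $u(s,a) = \V_{a',s' \sim \pi, \bar{p}}\bracket{\bar{Q}^\pi(s',a')} - \E_{p \sim \Phi}\bracket{\V_{a',s' \sim \pi, p}\bracket{Q^{\pi,p}(s',a')}}$, so it suffices to establish the $Q$-analog of the proxy identity \cref{eq:thm2_proxy_equivalence}, namely $\V_{a',s' \sim \pi, \bar{p}}\bracket{\bar{Q}^\pi(s',a')} = w(s,a) + \E_{p \sim \Phi}\bracket{\V_{a',s' \sim \pi, p}\bracket{\bar{Q}^\pi(s',a')}}$. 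Subtracting this from the expression for $u(s,a)$ will then yield $u(s,a) = w(s,a) - g(s,a)$ with $g(s,a)$ exactly as defined.

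To obtain the proxy identity, I would reproduce \cref{lemma:pombu_local_uncertainty,lemma:rhs_pombu_uncertainty_first,lemma:rhs_pombu_uncertainty_second} by applying the law of total variance to the random variable $Y = \sum_{a',s'}\pi(a' \mid s')p(s' \mid s,a)Q^{\pi,p}(s',a')$ conditioned on $\mathbf{P}_{s,a}$. The prerequisite is a $Q$-value analog of \cref{lemma:independence_from_assumptions}: under \cref{assumption:acyclic,assumption:transitions}, any state $s'$ reachable from $(s,a)$ occurs strictly later in the DAG, so $Q^{\pi,p}(s',a')$ depends only on transition parameters disjoint from $p(\cdot \mid s,a)$ and is therefore independent of $p(s' \mid s,a)$. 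Once this independence is in hand, the algebra—expanding squares, swapping sums with the posterior expectation via the uncorrelated-property lemma, and invoking $\bar{p}(s' \mid s,a) = \E_{p \sim \Phi}\bracket{p(s' \mid s,a)}$—is line-for-line the same as in the state-value proof.

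For the non-negativity of $g(s,a)$, I would mirror \cref{lemma:inflated_uncertainty} and compute $\E\bracket{\V\bracket{Y \given \mathbf{P}_{s,a}}}$ in two different ways. The identity $\E\bracket{\V\bracket{Y \given X}} = \E\bracket{\E\bracket{(Y - \E\bracket{Y \given X})^2 \given X}}$ produces a term of the form $\E_{p \sim \Phi}\bracket{\V_{a',s' \sim \pi,p}\bracket{Q^{\pi,p}(s',a') - \bar{Q}^\pi(s',a')}}$, which is manifestly non-negative as the expectation of a variance. The alternative expansion $\E\bracket{\V\bracket{Y \given X}} = \E\bracket{\E\bracket{Y^2 \given X}} - \E\bracket{\paren{\E\bracket{Y \given X}}^2}$ yields the same quantity but now with an extra summand of exactly $-g(s,a)$ after the analogous $Q$-value computations. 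Equating the two expansions and rearranging gives $g(s,a) \geq 0$, and consequently $u(s,a) \leq w(s,a)$.

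The main obstacle is purely notational bookkeeping: in the state-value proof the policy is sampled at the current state $s$ (so $p^\pi(a,s' \mid s) = \pi(a \mid s)p(s' \mid s,a)$), whereas for $Q$-values the policy sampling moves to the next state $s'$ (so the joint becomes $\pi(a' \mid s')p(s' \mid s,a)$). This shift must be tracked carefully through every inner expectation and sum, but no new conceptual ingredient beyond what already appears in \cref{app:proofshm_connections} is needed.
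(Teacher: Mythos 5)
Your proposal is correct and follows essentially the same route as the paper, which itself proves this theorem by stating that the derivation of \cref{app:proofshm_connections} carries over and that non-negativity of $g(s,a)$ follows from showing it equals $\E_{p \sim \Phi}\bracket{\V_{a',s' \sim \pi, p}\bracket{Q^{\pi,p}(s',a') - \bar{Q}^\pi(s',a')}}$ — exactly the two steps you outline. Your additional care about the $Q$-value analog of \cref{lemma:independence_from_assumptions} and the shift of the policy sampling to the next state is the right bookkeeping and matches the paper's intent.
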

\begin{proof}
  Follows the same derivation as \cref{app:proofshm_connections}. Similarly, we can prove that the
  gap $g(s,a)$ is non-negative by showing it is equal to $\E_{p \sim
  \Phi}\bracket{\V_{a',s' \sim \pi, p} \bracket{Q^{\pi,p}(s',a') - \bar{Q}^\pi(s',a')}}$.
\end{proof}

\subsection{State-Action Uncertainty Rewards}
\label{app:q_uncertainty_rewards}
In our practical experiments, we use the results of both
\Cref{app:unknown_rewards,app:extension_q_values} to compose the uncertainty rewards propagated via
the UBE. Concretely, we consider the following two approaches for computing state-action uncertainty
rewards:
\begin{itemize}
  \item \texttt{pombu}: 
  \begin{equation}
    \label{eq:pombu_q_rewards}
    w(s,a) = \V_{p\sim \Phi} \bracket{\sum_{a', s'}\pi(a' \mid s') p(s' \mid s,a)
    \bar{Q}^\pi(s',a')}
  \end{equation}
  \item \texttt{exact-ube}:
  \begin{equation}
    \label{eq:exact_ube_q_rewards}
    u(s,a) = w(s,a) - \E_{p \sim
    \Phi}\bracket{\V_{a',s' \sim \pi, p} \bracket{Q^{\pi,p}(s',a') - \bar{Q}^\pi(s',a')}}
  \end{equation}
\end{itemize}

Additionally, since we also learn the reward function, we add to the above the uncertainty term
generated by the reward function posterior, as shown in \cref{app:unknown_rewards}:
$\V_{r \sim \Psi} \bracket{r(s,a)}$.

\section{Tabular Environments Experiments}
In this section, we provide more details about the tabular implementation of
\Cref{algorithm:our_algorithm} and environment details.

\subsection{Implementation Details}
\label{app:experimental_details}
\paragraph{Model learning.} For the transition function we use a prior
$\text{Dirichlet}(1/\sqrt{S})$ and for rewards a standard normal $\mathcal{N}(0,1)$, as done by
\citet{odonoghue_making_2019}. The choice of priors leads to closed-form posterior updates based on
state-visitation counts and accumulated rewards. We add a terminal state to our modeled MDP in order
to compute the values in closed-form via linear algebra. 

~\\
\paragraph{Accelerating learning.} For the \emph{DeepSea} benchmark we accelerate learning by
imagining each experienced transition $(s, a, s', r)$ is repeated $L$ times, as initially suggested
in \citet{osband_deep_2019} (see footnote $9$), although we scale the number of repeats with the
size of the MDP. Effectively, this strategy forces the MDP posterior to shrink faster, thus making
all algorithms converge in fewer episodes. The same strategy was used for all the methods evaluated
in the benchmark.

~\\
\paragraph{Policy optimization.} All tested algorithms (PSRL and OFU variants) optimize the policy
via policy iteration, where we break ties at random when computing the $\argmax$, and limit the
number of policy iteration steps to $40$.

~\\
\paragraph{Hyperparameters.} Unless noted otherwise, all tabular RL experiments use a discount
factor $\gamma=0.99$, an exploration gain $\lambda = 1.0$ and an ensemble size $N=5$.

~\\
\paragraph{Uncertainty reward clipping.} For \emph{DeepSea} we clip uncertainty rewards with
$u_{\min} = -0.05$ and for the 7-room environment we keep $u_{\min} = 0.0$.

\subsection{Environment Details}
\paragraph{\emph{DeepSea}.} As proposed by \citet{osband_deep_2019}, \emph{DeepSea} is a grid-world
environment of size $L \times L$, with $S = L^2$ and $A = 2$.

~\\
\paragraph{7-room.} As implemented by \citet{domingues_rlberry_2021}, the 7-room environment
consists of seven connected rooms of size $5\times 5$, represented as an MDP of size $S=181$ and
discrete action space with size $A=4$. The starting state is always the center cell of the middle
room, which yields a reward of $0.01$. The center cell of the left-most room gives a reward of $0.1$
and the center cell of the right-most room gives a large reward of $1$. The episode terminates after
$40$ steps and the state with large reward is absorbing (i.e., once it reaches the rewarding state,
the agent remains there until the end of the episode). The agent transitions according to the
selected action with probability $0.95$ and moves to a randomly selected neighboring cell with
probability $0.05$.

\section{Online Deep RL Experiments}
\label{app:online_deep_rl}
In this section, we provide details regarding the online implementation of QU-SAC. Also, we include
relevant hyperparameters, environment details and additional results.

\subsection{Implementation Details}
\label{app:deep_rl_implementation}
We build QU-SAC on top of MBPO \citep{janner_when_2019} following \Cref{algorithm:qusac_online}. The
main differences with the original implementation are as follows:
\begin{itemize}
  \item In \Cref{line:model_rollouts}, we perform a total of $N+1$ $k$-step rollouts corresponding
  to both the model-randomized and model-consistent rollout modalities. The original MBPO only
  executes the former to fill up $\mathcal{D}_{\text{model}}$.
  \item In \Cref{line:q_update}, we update the ensemble of $Q$-functions on the corresponding
  model-consistent buffer. MBPO trains twin critics (as in SAC) on mini-batches from
  $\mathcal{D}_{\text{model}}$.
  \item In \Cref{line:u_update}, we update the $U$-net for the UBE-based variance estimation
  methods.
  \item In \Cref{line:pi_update}, we update $\pi_\phi$ by maximizing the uncertainty-aware
  $Q$-values. MBPO maximizes the minimum of the twin critics (as in SAC). Both approaches include an
  entropy maximization term.
\end{itemize}
\begin{algorithm}[tb]
   \caption{QU-SAC (online)}
   \label{algorithm:qusac_online}
\begin{algorithmic}[1]
  \STATE Initialize policy $\pi_{\phi}$, predictive model $p_{\theta}$, critic ensemble
  $\set{Q_i}_{i=1}^{N}$, uncertainty net $U_\psi$ (optional), environment dataset $\mathcal{D}$,
  model datasets $\mathcal{D}_{\text{model}}$ and $\set{\mathcal{D}_i}_{i=1}^{N}$.

  \STATE global step $\leftarrow 0$
  \FOR{episode $t=0, \dots, T-1$}
    \FOR{$E$ steps}
      \IF{global step \% $F == 0$}
        \STATE Train model $p_{\theta}$ on $\mathcal{D}$ via maximum likelihood
        \FOR{$M$ model rollouts}

          \STATE Perform $k$-step model rollouts starting from $s \sim \mathcal{D}$; add to
          $\mathcal{D}_{\text{model}}$ and $\set{\mathcal{D}_i}_{i=1}^{N}$\label{line:model_rollouts}
        \ENDFOR
      \ENDIF

      \STATE Take action in environment according to $\pi_{\phi}$; add to $\mathcal{D}$
      \FOR{$G$ gradient updates}
        \STATE Update $\set{Q_i}_{i=1}^{N}$ with mini-batches from $\set{\mathcal{D}_i}_{i=1}^{N}$, via SGD on \cref{eq:loss_q} \label{line:q_update}
        \STATE (Optional) Update $U_\psi$ with mini-batches from $\mathcal{D}_{\text{model}}$, via SGD on \cref{eq:loss_u} \label{line:u_update}
        \STATE Update $\pi_\phi$ with mini-batches from $\mathcal{D}_{\text{model}}$, via SGD on \cref{eq:actor_loss} \label{line:pi_update}
      \ENDFOR
    \ENDFOR
    \STATE global step $\leftarrow$ global step $+ 1$
  \ENDFOR
\end{algorithmic}
\end{algorithm}

The main hyperparameters for our experiments are included in \Cref{tab:hparams_online}. Further
implementation details are now provided. 
\renewcommand{\arraystretch}{1.1}
\begin{table*}[t]
\caption{Hyperparameters for the DeepMind Control Suite experiments of \cref{subsec:online_experiments}. For MBPO, the only deviation from the listed parameters is the use of $M=2$ as the original method uses clipped Q-learning.}
\label{tab:hparams_online}
\begin{center}
\resizebox{1.0\textwidth}{!}{%
\begin{tabular}{|c|c|}
\toprule
\textbf{Name}  & \textbf{Value} \\
\midrule
\multicolumn{2}{|c|}{\textbf{General}} \\
\midrule
$T$ - \# episodes & $500$\\
$E$ - steps per episode & $10^3$ \\ 
Replay buffer $\mathcal{D}$ capacity & $10^5$ \\
Batch size (all nets) & $256$ \\
Warm-up steps (under initial policy) & $5 \times 10^3$ \\
\midrule
\multicolumn{2}{|c|}{\textbf{SAC}} \\
\midrule
$G$ - \# gradient steps & $10$\\
Auto-tuning of entropy coefficient $\alpha$? & Yes  \\
Target entropy & $-\text{dim}(\mathcal{A})$ \\
Actor MLP network & 2 hidden layers - 128 neurons - Tanh activations \\
Critic MLP network & 2 hidden layers - 256 neurons - Tanh activations \\
Actor/Critic learning rate & $3\times 10^{-4}$ \\
\midrule
\multicolumn{2}{|c|}{\textbf{Dynamics Model}} \\
\midrule
$N$ - ensemble size & $5$\\
$F$ - frequency of model training (\# steps) & $250$ \\
$L$ - \# model rollouts per step & $400$ \\
$k$ - rollout length & $5$ \\
$\Delta$ - \# Model updates to retain data & $1$ \\
Model buffer(s) capacity & $L \times F \times k \times \Delta = 5 \times 10^5$
\\
Model MLP network  & 4 layers - 200 neurons -
SiLU activations \\
Learning rate & $1\times 10^{-3}$\\
\midrule
\multicolumn{2}{|c|}{\textbf{QU-SAC Specific}} \\
\midrule
$M$ - \# critics per dynamics model & $1$\\
$\lambda$ - \# uncertainty gain & $1.0$ \\
Uncertainty type & \set{\texttt{ensemble-var}, \texttt{upper-bound}} \\
\bottomrule
\end{tabular}
}
\end{center}
\end{table*}

\noindent\paragraph{Model learning.} We leverage the \texttt{mbrl-lib} Python library from
\citet{pineda_mbrl-lib_2021} and train an ensemble of $N$ probabilistic neural networks. We use the
default MLP architecture with four layers of size 200 and SiLU activations. The networks predict
delta states, $\Delta = s' - s$, and receive as input state-action pairs. We use the default
initialization of the network provided by the library, which samples weights from a truncated
Gaussian distribution, however we found it helpful to increase by a factor of $2.0$ the standard
deviation of the truncated Gaussian; a wider distribution of weights allows for more diverse dynamic
models at the beginning of training.

\noindent\paragraph{Model-generated buffers.} The capacity of the model-generated buffers
$\mathcal{D}_{\text{model}}$ and $\set{\mathcal{D}^i_{\text{model}}}_{i=1}^{N}$ is computed as $k
\times M \times F \times \Delta$, where $\Delta$ is the number of
model updates before entirely overwriting the buffers. Larger values of this parameter allows for
more off-policy (old) data to be stored and sampled for training.

\noindent\paragraph{SAC specifics.} Our SAC implementation is based on the open-source repository
\url{https://github.com/pranz24/pytorch-soft-actor-critic}, as done by \texttt{mbrl-lib}. For all
our experiments, we use the automatic entropy tuning flag that adaptively modifies the entropy gain
$\alpha$ based on the stochasticity of the policy.

\subsection{Environment Details}
We take a subset of sparse reward environments from the DeepMind Control Suite and include an
additional action cost proportional to the squared norm of the action taken by the agent. Namely,
\begin{equation}
  \text{\texttt{action\textunderscore cost}} = \rho \sum_{i=1}^{\abs{\mathcal{A}}} a_i^2
\end{equation} 
where $\rho$ is an environment specific multiplier, $a_i$ is the $i$-th component of the action
vector and $\abs{\mathcal{A}}$ is the size of the action space. For \texttt{acrobot},
\texttt{reacher-hard} and \texttt{cartpole-swingup} we use $\rho=0.01$; for \texttt{pendulum} and
\texttt{point-mass} we use $\rho = 0.05$; and lastly, for \texttt{ball-in-cup} we use $\rho=0.2$.

\section{Offline Deep RL Experiments}
\label{app:offline_deep_rl}
In this section, we provide further details regarding the use of QU-SAC for offline optimization,
which includes a detailed algorithmic description, hyperparameters and learning curves not included
in the main body of the paper.

\begin{algorithm}[tb]
   \caption{QU-SAC (offline)}
   \label{algorithm:qusac_offline}
\begin{algorithmic}[1]
  \STATE Initialize policy $\pi_{\phi}$, predictive model $p_{\theta}$, critic ensemble
  $\set{Q_i}_{i=1}^{N}$, uncertainty net $U_\psi$ (optional), offline dataset $\mathcal{D}$,
  model datasets $\mathcal{D}_{\text{model}}$ and $\set{\mathcal{D}_i}_{i=1}^{N}$.

  \STATE Train model $p_{\theta}$ on $\mathcal{D}$ via maximum likelihood
  \FOR{steps $g=0, \dots, G-1$}

    \IF{g \% $F == 0$}
      \FOR{$L$ model rollouts}
        \STATE Perform $k$-step model rollouts starting from $s \sim \mathcal{D}$; add to $\mathcal{D}_{\text{model}}$ and $\set{\mathcal{D}_i}_{i=1}^{N}$
      \ENDFOR
    \ENDIF

    \STATE Update $\set{Q_i}_{i=1}^{N}$ with mini-batches from $\set{\mathcal{D} \cup \mathcal{D}_i}_{i=1}^{N}$, via SGD on \cref{eq:loss_q}
    \STATE (Optional) Update $U_\psi$ with mini-batches from $\mathcal{D} \cup \mathcal{D}_{\text{model}}$, via SGD on \cref{eq:loss_u}
    \STATE Update $\pi_\phi$ with mini-batches from $\mathcal{D} \cup \mathcal{D}_{\text{model}}$, via SGD on \cref{eq:actor_loss}
  \ENDFOR
\end{algorithmic}
\end{algorithm}

\subsection{Implementation Details}
\label{app:deep_rl_offline_implementation}
We modify the online version of QU-SAC described in \cref{algorithm:qusac_online} to reflect the
execution flow of offline optimization, which we present in \cref{algorithm:qusac_offline}. The
hyperparameters used for the reported results are included in \cref{tab:hparams_offline}. Beyond the
algorithmic changes, we now list the main implementation details differing from the online
implementation of QU-SAC:
\renewcommand{\arraystretch}{1.1}
\begin{table*}[t]
\caption{Hyperparameters for the D4RL experiments of \cref{subsec:offline_experiments}.}
\label{tab:hparams_offline}
\begin{center}
\resizebox{1.0\textwidth}{!}{%
\begin{tabular}{|c|c|}
\toprule
\textbf{Name}  & \textbf{Value} \\
\midrule
\multicolumn{2}{|c|}{\textbf{General}} \\
\midrule
$G$ - gradient steps & $10^6$ \\ 
Replay buffer $\mathcal{D}$ capacity & $10^6$ \\
Batch size (all nets) & $512$ \\
\midrule
\multicolumn{2}{|c|}{\textbf{SAC}} \\
\midrule
Auto-tuning of entropy coefficient $\alpha$? & Yes  \\
Target entropy & $-\text{dim}(\mathcal{A})$ \\
Actor MLP network & 3 hidden layers - 256 neurons - Tanh activations \\
Critic MLP network & 3 hidden layers - 256 neurons - Tanh activations \\
Actor learning rate & $3\times10^{-5}$ \\
Critic learning rate & $3\times 10^{-4}$ \\
\midrule
\multicolumn{2}{|c|}{\textbf{Dynamics Model}} \\
\midrule
$N$ - ensemble size & $5$\\
$F$ - frequency of data collection (\# steps) & $1000$ \\
$L$ - rollout batch size & $5\times 10^4$ \\
$k$ - rollout length & $15$ \\
$\Delta$ - \# Data collection calls to retain data & $5$ \\
Model buffer(s) capacity & $L \times k \times \Delta = 3.75 \times 10^6$
\\
Model MLP network  & 4 layers - 200 neurons -
SiLU activations \\
Learning rate & $1\times 10^{-3}$\\
\midrule
\multicolumn{2}{|c|}{\textbf{QU-SAC Specific}} \\
\midrule
$M$ - \# critics per dynamics model & $\set{1, 2}$\\
$\lambda$ - \# uncertainty gain & $-1.0$ \\
Uncertainty type & \set{\texttt{ensemble-var}, \texttt{upper-bound}} \\
\bottomrule
\end{tabular}
}
\end{center}
\end{table*}

\noindent\paragraph{Model learning.} The only difference w.r.t. the online setting is that the we
normalize the state-action inputs to the model, where the normalization statistics are calculated
based on the offline dataset $\mathcal{D}$.

\noindent\paragraph{Data mixing.} In Lines 7-9, we highlight that, in contrast to the online
setting, the mini-batches used to train the critic, actor and $U$-net mix both model-generated and
offline data. In particular, we use a fixed 50/50 split between these two data sources for all our
experiments (inclusing QU-SAC and MOPO).

\noindent\paragraph{MOPO details.} In order to conduct a fair comparison between MOPO and QU-SAC, we
implement MOPO in our codebase so that it shares the same core components as our QU-SAC
implementation. After initial testing of our MOPO implementation, we found that using an uncertainty
penalty of $\lambda=1.0$ worked well across datasets. Note that our implementation of MOPO (labeled
MOPO$\star$ in \cref{tab:d4rl_sota_comp}) significantly outperforms the scores reported by
\citet{bai_pessimistic_2022}, which were obtained by running the original codebase by
\citet{yu_mopo_2020} but on the v2 datasets from D4RL.

\subsection{Dataset Details}
We use the v2 version of D4RL datasets and evaluate using the normalized scores provided by the
software package.
\begin{figure*}[ht!]
	\centering
  \includegraphics[width=0.9\textwidth]{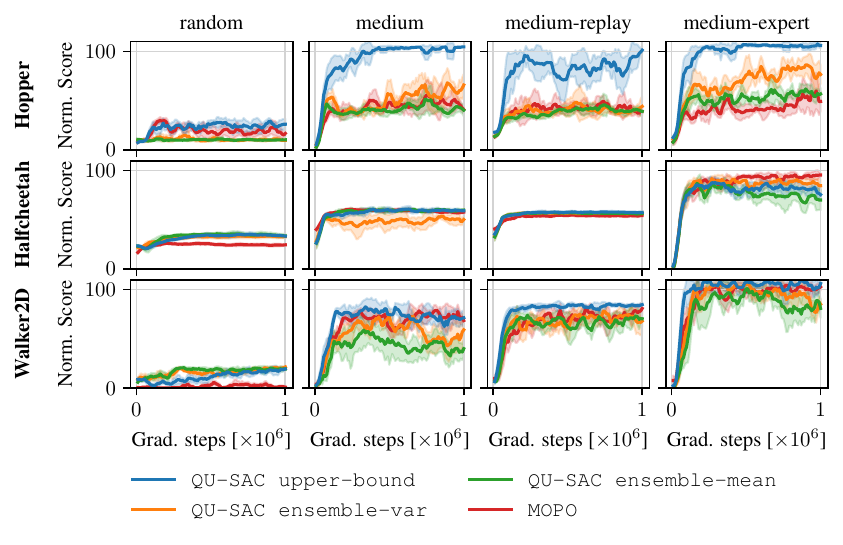}
  \caption{D4RL smoothened learning curves for $M=1$. We report the mean and standard deviation over five random seeds of the average normalized score over 10 evaluation episodes.}
  \label{fig:full_d4rl_m1}
\end{figure*}

\begin{figure*}[ht!]
	\centering
  \includegraphics[width=0.9\textwidth]{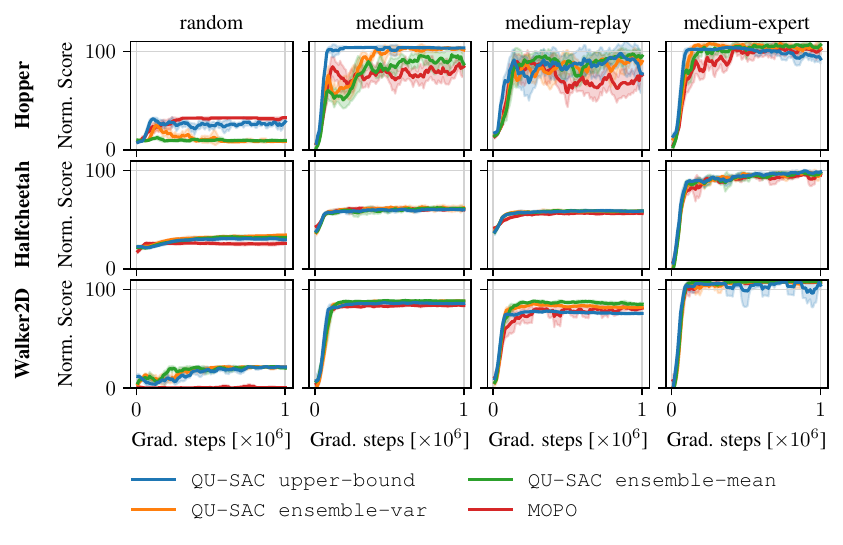}
  \caption{D4RL smoothened learning curves for $M=2$. We report the mean and standard deviation over five random seeds of the average normalized score over 10 evaluation episodes.}
  \label{fig:full_d4rl_m2}
\end{figure*}

\subsection{D4RL Learning Curves \& Scores}
In \cref{fig:full_d4rl_m1,fig:full_d4rl_m2} we include all the learning curves for $M = \set{1, 2}$,
respectively. For each run, we report the average normalized score over 10 evaluation episodes.
These averaged scores are then also averaged over five independent random seeds to obtain the
reported learning curves. In \cref{tab:d4rl_appendix} we report the associated final scores after 1M
gradient steps.

We observe that for $M=2$ \texttt{upper-bound} has lower overall performance than
\texttt{ensemble-var}. We believe this difference in performance is largely due to using a fixed
value of $\lambda=-1.0$ for all experiments. Since using $M=2$ alread acts a strong regularizer in
the offline setting, \texttt{upper-bound} would likely benefit from using a lower magnitude
$\lambda$ given the (empirically) larger uncertainty estimates compared to \texttt{ensemble-var}.

\begin{table*}[ht!]
\addtolength{\tabcolsep}{-2pt}
\renewcommand\arraystretch{1.5}
\centering
\caption{D4RL scores after 1M gradient steps. We report the mean and standard deviation over five random seeds of the average normalized score across 10 evaluation episodes. We highlight the highest mean scores for each value of $M$.}
\label{tab:d4rl_appendix}
\vspace{1ex}
\resizebox{1.0\textwidth}{!}{%
\begin{tabular}{
  c@{\hspace{3pt}}l@{\hspace{5.0pt}}r@{\hspace{-0pt}}lr@{\hspace{-0.0pt}}lr@{\hspace{-0.0pt}}lr@{\hspace{-0.0pt}}l|r@{\hspace{-0.0pt}}lr@{\hspace{-0.0pt}}lr@{\hspace{-0.0pt}}lr@{\hspace{-0.0pt}}lr}

\toprule

\multicolumn{1}{l}{} & & \multicolumn{8}{c|}{$M=1$} &  \multicolumn{8}{c}{$M=2$} \\

\multicolumn{1}{l}{} & & \multicolumn{2}{c}{MOPO$\star$} & \multicolumn{2}{c}{e-mean} &
\multicolumn{2}{c}{e-var} & \multicolumn{2}{c|}{u-bound} & \multicolumn{2}{c}{MOPO$\star$} &
\multicolumn{2}{c}{e-mean} & \multicolumn{2}{c}{e-var} & \multicolumn{2}{c}{u-bound}  \\

\midrule
\multirow{3}{*}{\rotatebox[origin=c]{90}{Random}} 

& HalfCheetah & 24.8 & $\pm$0.7  & \colorbox{mine}{33.6} & $\pm$3.8 & 33.0 & $\pm$1.7& 33.4 &
$\pm$1.2 & 25.9 & $\pm$1.4 & 32.4 & $\pm$1.8 & \colorbox{mine}{34.8} & $\pm$1.0 & 30.2 & $\pm$1.5 \\

& Hopper & 20.7 & $\pm$9.2  & 10.3 & $\pm$0.8  & 9.3 & $\pm$1.8 & \colorbox{mine}{28.3} & $\pm$8.3 &
\colorbox{mine}{32.6} & $\pm$0.2 & 9.8 & $\pm$1.1 & 8.7 & $\pm$0.8 & 31.5 & $\pm$0.2 \\

& Walker2D & 0.5 & $\pm$0.3  & 20.3 & $\pm$4.4  & \colorbox{mine}{21.9} & $\pm$1.0 & 18.2  &
$\pm$5.0 & 1.0 & $\pm$1.9 & 20.5 & $\pm$2.3 & \colorbox{mine}{21.7} & $\pm$0.1 &
\colorbox{mine}{21.7} & $\pm$0.1 \\

\midrule
\multirow{3}{*}{\rotatebox[origin=c]{90}{Medium}} 

& HalfCheetah & 57.7 & $\pm$1.5  & \colorbox{mine}{60.6} & $\pm$1.4 & 58.5 & $\pm$2.9 & 59.7 &
$\pm$2.5 & 60.6 & $\pm$2.4 & 60.3 & $\pm$0.8 & \colorbox{mine}{63.7} & $\pm$2.3 & 60.6 & $\pm$1.7 \\

& Hopper & 35.4 & $\pm$4.1  & 41.4 & $\pm$8.8  & 75.3 & $\pm$24.1  & \colorbox{mine}{104.7} &
$\pm$1.0 & 81.3 & $\pm$15.7 & 78.8 & $\pm$19.8 & 102.0 & $\pm$3.8 &
\colorbox{mine}{103.5} & $\pm$0.2\\

& Walker2D & 56.9 & $\pm$25.8  & 58.3 & $\pm$17.9  & 57.3 & $\pm$19.9  & \colorbox{mine}{67.8} &
$\pm$14.4 & 85.3 & $\pm$1.3 & 88.3 & $\pm$1.2 & \colorbox{mine}{88.8} & $\pm$0.9 & 86.5 & $\pm$0.7
\\

\midrule
\multirow{3}{*}{\rotatebox[origin=c]{90}{\shortstack{Medium\\Replay}}} 

& HalfCheetah & 53.5 & $\pm$2.0  & 56.2 & $\pm$1.5 & 57.3 & $\pm$1.2 & \colorbox{mine}{57.5} &
$\pm$0.9 & 55.7 & $\pm$0.9 & \colorbox{mine}{58.9} & $\pm$0.7 & 58.4 & $\pm$0.7 &
\colorbox{mine}{58.9} & $\pm$1.3 \\

& Hopper & 36.0 & $\pm$2.7  & 38.1 & $\pm$4.3  & 42.3 & $\pm$8.4 & \colorbox{mine}{102.0}  &
$\pm$1.2 & 69.0 & $\pm$27.0 & 100.3 & $\pm$3.6 & \colorbox{mine}{102.9} & $\pm$0.5
& 86.2 & $\pm$20.9 \\

& Walker2D & \colorbox{mine}{88.2} & $\pm$5.4  & 75.8 & $\pm$13.6 & 77.9 & $\pm$13.3 & 84.8 &
$\pm$2.5 & 83.1 & $\pm$5.0 & \colorbox{mine}{84.1} & $\pm$1.4 & 82.4 & $\pm$2.9 & 76.8 & $\pm$0.6 \\

\midrule
\multirow{3}{*}{\rotatebox[origin=c]{90}{\shortstack{Medium\\Expert}}} 

& HalfCheetah & \colorbox{mine}{98.0} & $\pm$3.6  & 68.6 & $\pm$17.7 & 86.9 & $\pm$17.4 & 74.3 &
$\pm$16.8 & 95.0 & $\pm$1.7 & \colorbox{mine}{99.5} & $\pm$2.4 & \colorbox{mine}{99.5} & $\pm$1.9 &
99.1 & $\pm$2.5 \\

& Hopper & 47.6 & $\pm$8.3  & 56.3 & $\pm$14.9 & 65.6 & $\pm$16.1 & \colorbox{mine}{107.0} &
$\pm$1.2 & 104.5 & $\pm$7.7 & \colorbox{mine}{106.9} & $\pm$3.0 &102.1 & $\pm$12.6
& 93.8 & $\pm$10.4 \\

& Walker2D & 106.2 & $\pm$1.2  & 65.9 & $\pm$35.6  & 83.9 & $\pm$21.3 & \colorbox{mine}{106.8} &
$\pm$5.1 & 107.7 & $\pm$0.8 & \colorbox{mine}{108.4} & $\pm$0.5 & 107.9 & $\pm$0.4 & 93.7 &
$\pm$25.6 \\

\midrule
\multicolumn{1}{l}{} & \textbf{Average} & \multicolumn{2}{c}{52.1} & \multicolumn{2}{c}{48.8}
&\multicolumn{2}{c}{55.8} & \multicolumn{2}{c|}{\colorbox{mine}{70.4}} & \multicolumn{2}{c}{66.8}
&\multicolumn{2}{c}{70.7} & \multicolumn{2}{c}{\colorbox{mine}{72.7}} &
\multicolumn{2}{c}{70.2}\\


\multicolumn{1}{l}{} & \textbf{IQM} & \multicolumn{2}{c}{47.9} & \multicolumn{2}{c}{51.8}
&\multicolumn{2}{c}{59.4} & \multicolumn{2}{c|}{\colorbox{mine}{74.4}} & \multicolumn{2}{c}{72.5}
&\multicolumn{2}{c}{78.3} & \multicolumn{2}{c}{\colorbox{mine}{82.5}} &
\multicolumn{2}{c}{77.1}\\

\bottomrule
\end{tabular}
}
\end{table*}

\end{appendices}

\clearpage
\section*{Declarations}
\paragraph{Funding.} Carlos E. Luis, Alessandro G. Bottero, Julia Vinogradska
and Felix Berkenkamp received funding (salary) from Bosch Corporate Research for
conducting this study.

\bibliography{references}

\end{document}